\documentclass{article}

\usepackage{arxiv}

\usepackage[utf8]{inputenc} 
\usepackage[T1]{fontenc}    
\usepackage{hyperref}       
\usepackage{url}            
\usepackage{booktabs}       
\usepackage{amsfonts}       
\usepackage{nicefrac}       
\usepackage{microtype}      
\usepackage{lipsum}         
\usepackage{graphicx}

\renewcommand{\inf}[1]{\underset{#1}{\text{inf}}\,}

\newcommand{\argmin}[1]{\underset{#1}{\text{argmin}}\,}


\usepackage{algorithm}
\usepackage{algpseudocode}
\usepackage{colortbl}

\usepackage{graphicx}
\usepackage{amsmath, amsthm}
\usepackage{amssymb}
\usepackage{booktabs}
\usepackage{fontawesome}
\usepackage{color}
\usepackage{tikzit}
\usepackage{algorithm}
\usepackage{algpseudocode}
\usepackage{nicefrac}
\usepackage{colortbl}
\usepackage{wrapfig}
\usepackage{subcaption}
\usepackage[acronym]{glossaries}

\newacronym{sgd}{SGD}{Stochastic Gradient Descent}
\newacronym{da}{DA}{Domain Adaptation}
\newacronym{spc}{SPC}{Samples per Class}
\newacronym{erm}{ERM}{Empirical Risk Minimization}
\newacronym{otda}{OTDA}{Optimal Transport for Domain Adaptation}
\newacronym{msda}{MSDA}{Multi-Source DA}
\newacronym{fl}{FL}{Federated Learning}
\newacronym{fda}{FDA}{Federated DA}
\newacronym{ot}{OT}{Optimal Transport}
\newacronym{mtl}{MTL}{Multitask Learning}
\newacronym{dil}{DiL}{Dictionary Learning}
\newacronym{dadil}{DaDiL}{Dataset Dictionary Learning}
\newacronym{feddadil}{FedDaDiL}{Federated Dataset Dictionary Learning}
\newacronym{copa}{COPA}{Collaborative Optimization and Aggregation}
\newacronym{fada}{FADA}{Federated Adversarial Domain Adaptation}
\newacronym{kd3a}{KD3A}{Knowledge Distillation and Decentralized Domain Adaptation}
\newacronym{comda}{Co-MDA}{Co$^{2}$-Learning with Multi-Domain Attention}


\renewcommand{\inf}[1]{\underset{#1}{\text{inf}}\,}

\newcommand{\myiid}[0]{\overset{\text{i.i.d.}}{\sim}}

\newtheorem{theorem}{Theorem}[section]
\newtheorem{definition}{Definition}[section]

\title{Dataset Dictionary Learning in a Wasserstein Space for Federated Domain Adaptation}

\date{}

\newif\ifuniqueAffiliation
\uniqueAffiliationtrue

\ifuniqueAffiliation 
\author{
Eduardo Fernandes Montesuma\\
CEA, List\\
Université Paris-Saclay\\
F-91120 Palaiseau, France
\And
Fabiola Espinoza Castellon\\
CEA, List\\
Université Paris-Saclay\\
F-91120 Palaiseau, France
\And
Fred Ngolè Mboula\\
CEA, List\\
Université Paris-Saclay\\
F-91120 Palaiseau, France
\And
Aur\'elien Mayoue\\
CEA, List\\
Université Paris-Saclay\\
F-91120 Palaiseau, France
\And
Antoine Souloumiac\\
CEA, List\\
Université Paris-Saclay\\
F-91120 Palaiseau, France
\And
C\'edric Gouy-Pailler\\
CEA, List\\
Université Paris-Saclay\\
F-91120 Palaiseau, France
}
\else
\usepackage{authblk}

\setlength{\affilsep}{0em}
\newbox{\orcid}\sbox{\orcid}{\includegraphics[scale=0.06]{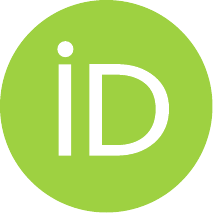}} 
\author[1]{%
	\href{https://orcid.org/0000-0000-0000-0000}{\usebox{\orcid}\hspace{1mm}David S.~Hippocampus\thanks{\texttt{hippo@cs.cranberry-lemon.edu}}}%
}
\author[1,2]{%
	\href{https://orcid.org/0000-0000-0000-0000}{\usebox{\orcid}\hspace{1mm}Elias D.~Striatum\thanks{\texttt{stariate@ee.mount-sheikh.edu}}}%
}
\affil[1]{Department of Computer Science, Cranberry-Lemon University, Pittsburgh, PA 15213}
\affil[2]{Department of Electrical Engineering, Mount-Sheikh University, Santa Narimana, Levand}
\fi


\hypersetup{
pdftitle={Dataset Dictionary Learning in a Wasserstein Space for Federated Domain Adaptation},
pdfsubject={cs.LG},
pdfauthor={Montesuma,Espinoza,Mboula,Mayoue,Souloumiac,Gouy-Pailler},
pdfkeywords={Federated Learning,Domain Adaptation,Dataset Dictionary Learning,Optimal Transport},
}

\begin{document}
\maketitle

\maketitle

\begin{abstract}
Multi-Source Domain Adaptation (MSDA) is a challenging scenario where multiple related and heterogeneous source datasets must be adapted to an unlabeled target dataset. Conventional MSDA methods often overlook that data holders may have privacy concerns, hindering direct data sharing. In response, decentralized MSDA has emerged as a promising strategy to achieve adaptation without centralizing clients' data. Our work proposes a novel approach, Decentralized Dataset Dictionary Learning, to address this challenge. Our method leverages Wasserstein barycenters to model the distributional shift across multiple clients, enabling effective adaptation while preserving data privacy. Specifically, our algorithm expresses each client's underlying distribution as a Wasserstein barycenter of public atoms, weighted by private barycentric coordinates. Our approach ensures that the barycentric coordinates remain undisclosed throughout the adaptation process. Extensive experimentation across five visual domain adaptation benchmarks demonstrates the superiority of our strategy over existing decentralized MSDA techniques. Moreover, our method exhibits enhanced robustness to client parallelism while maintaining relative resilience compared to conventional decentralized MSDA methodologies.
\end{abstract}

\keywords{Federated Learning \and Domain Adaptation \and Dataset Dictionary Learning \and Optimal Transport}
\section{Introduction}

Supervised machine learning models are trained with large amounts of labeled data. However, these models are subject to performance degradation, if the data used for training does not exactly resembles those used for test. This issue is known in the literature as dataset, or distributional shift~\cite{quinonero2008dataset}. For instance, in computer vision, factors such as illumination, pose and image quality can induce changes in the data underlying distribution~\cite{saenko2010adapting,wang2018deep}. In this context \gls{msda}~\cite{crammer2008learning,montesuma2023learning} emerged as a strategy to adapt multiple, heterogeneous, labeled source datasets towards an unlabeled target dataset.

Nonetheless, standard methods in \gls{msda} overlook that datasets may be divided over multiple clients, rather than centralized on a server. Due to privacy concerns, these clients may not want to centralize their data. Motivated by this challenge, decentralized \gls{msda} is a possible solution to this problem~\cite{peng2019federated,feng2021kd3a,liu2023co}. In parallel to this strategy, distributional shift has been considered by the federated learning literature~\cite{mcmahan2017communication,bai2024benchmarking}. However, concerning domain adaptation, \textbf{non-i.i.d. federated learning strategies are limited since they do not exploit unlabeled target domain data}.

Existing methods in decentralized \gls{msda} mainly follow 2 strategies. First, one may align the multiple existing distributions in a latent space, by learning invariant features~\cite{peng2019federated}. Second, on top of aligning distributions, one may perform pseudo-labeling of the target domain through classifiers learned on the source domains~\cite{feng2021kd3a,liu2023co}. While invariant representation learning is an important component in domain adaptation, it poses a trade-off between classification performance and domain invariance~\cite{zhao2022fundamental}, which may limit domain adaptation performance. \textbf{We thus take a different route by modeling the shift between domains in a Wasserstein space}.

In parallel, \gls{ot} is a mathematical theory concerned with the displacement of mass at least effort. This theory previously contributed to the diverse landscape of \gls{da}, both in single-source~\cite{courty2017otda,courty2017joint} and multi-source~\cite{montesuma2021icassp,montesuma2021cvpr,turrisi2022multi} settings. \gls{ot} is especially advantageous, as it is sensitive to the geometry of the data ambient space~\cite{montesuma2023recent}. In addition, it allows for the definition of \emph{averages of probability distributions} through Wasserstein barycenters~\cite{agueh2011barycenters}, which has been previously used for \gls{da}~\cite{montesuma2021icassp,montesuma2021cvpr,montesuma2023learning}. Overall, \gls{ot} presents a principled framework for developing \gls{da} algorithms~\cite{redko2017theoretical}.

Given the aforementioned limitations of invariant representation learning, we offer a novel, \gls{ot}-based method that learns how to express clients' underlying probability distribution as a Wasserstein barycenter of learned atoms, weighted by barycentric coordinates. As such, our method effectively keeps clients' distributions private, because their barycentric coordinates do not need to be communicated. To the best of our knowledge, ours is the first \gls{ot}-inspired decentralized \gls{da} algorithm, which \textbf{does not align clients' distributions}. Our contributions are threefold,
\begin{enumerate}
    \item We propose a novel strategy, called \gls{feddadil}, for performing decentralized dictionary learning over empirical distributions. This strategy has the advantage of keeping the variables that allow to reconstruct clients' data distributions, i.e., the barycentric coordinates, private.
    \item We provide a novel theoretical analysis of the objective function of \gls{dadil}~\cite{montesuma2023learning}, showing that, for small perturbations, it behaves as a quadratic form on the atoms' feature vectors.
    \item We provide extensive empirical results on five visual \gls{da} benchmarks, showing that (i) our strategy has superior performance on all datasets, (ii) \gls{feddadil} is lightweight compared to communicating the parameters of deep neural nets. Over the tested benchmarks, communicating dictionaries with the server corresponds to approximately $34.6\% \pm 23.3\%$ of the total amount of bits used to encode ResNet weights, and (iii) \gls{feddadil} is more robust w.r.t. client parallelism than previous methods.
\end{enumerate}
Especially, this paper extends the previous work~\cite{castellon2024federated} in two important ways. First, we average atoms in a similar way to FedAVG, which proved to be effective (c.f., section~\ref{sec:experiments}). Second, we provide a new theoretical analysis on the loss function being minimizing throughout dictionary learning.

The rest of this paper is organized as follows. Section~\ref{sec:related_work} presents related work on dictionary learning and decentralized \gls{msda}. Section~\ref{sec:methodology} presents our novel approach for decentralized \gls{msda}, constituted of \emph{FedAVG} (section~\ref{sec:learning_encoder}) and \emph{\gls{feddadil}} (section~\ref{sec:fed_dadil}). Section~\ref{sec:experiments} presents our experiments on various visual \gls{da} benchmarks. Finally, section~\ref{sec:conclusion} concludes this paper.

\section{Related Work}\label{sec:related_work}

\noindent\textbf{Dictionary Learning} seeks to decompose a set of vectors $\{\mathbf{x}_{1},\cdots,\mathbf{x}_{N}\}$, $\mathbf{x}_{\ell} \in \mathbb{R}^{d}$, as a linear combination of atoms $\{\mathbf{p}_{1},\cdots\mathbf{p}_{K}\}$, weighted by representation vectors $\{\alpha_{1},\cdots,\alpha_{N}\}$. When $\mathbf{x}_{\ell}$ are histograms (i.e., $\sum_{j}x_{\ell,j} =1 $ and $x_{\ell,j} \geq 0$), \gls{ot} defines meaningful loss functions~\cite{rolet2016fast}, as well as novel ways of aggregating atoms~\cite{schmitz2018wasserstein}. Additionally, decentralized strategies for learning dictionaries have been proposed by~\cite{gkillas2022federated} and \cite{liang2023personalized}, respectively. In this work we use a different interpretation of dictionary learning, in which the atoms are empirical probability distributions~\cite{montesuma2023learning}. To the best of our knowledge, ours is the first decentralized algorithm for dictionaries of empirical distributions.

\noindent\textbf{Decentralized Domain Adaptation.} Dataset heterogeneity is a major challenge in supervised learning, as commonly used tools, such as \gls{erm}~\cite{vapnik1991principles} work under the assumption of i.i.d. data. This issue also emerges in decentralized settings, such as federated learning~\cite{mcmahan2017communication}, in which client data follow different probability distributions~\cite{gao2022survey}. In this paper we consider decentralized \gls{msda}, where on top of clients with different probability distributions, we have an unique client, called \emph{target}, who do not possess labeled data. Different ideas in \gls{msda} have been adapted to the decentralized setting. \gls{fada}~\cite{peng2019federated} uses the adversarial learning framework of~\cite{ganin2016domain} with an additional feature disentanglement~\cite{bengio2013representation} module. \gls{kd3a}~\cite{feng2021kd3a} uses knowledge distillation~\cite{hinton2015distilling,meng2018adversarial} and pseudo-labeling for the adaptation. Finally, \gls{comda}~\cite{liu2023co} studies black box \gls{da}, a type of source-free \gls{da}~\cite{yu2023comprehensive}, where one has access only to outputs of source client models.


\section{Proposed Approach}\label{sec:methodology}

\noindent\textbf{Problem Statement.} We are interested in decentralized \gls{msda}. Clients are associated with indices $\ell=1,\cdots,N$, where $\ell = 1,\cdots,N-1$ are called \emph{source clients} and $\ell = N$ is the \emph{target} client. In what follows, we have access to pairs $\{(\mathbf{x}_{i}^{(Q_{\ell})}, \mathbf{y}_{i}^{(Q_{\ell})})\}_{i=1}^{n_{\ell}}$ from the sources, where $\mathbf{x}_{i}^{(Q_{\ell})} \in \mathbb{R}^{d}$ and $\mathbf{y}_{i}^{(Q_{\ell})} \in \Delta_{K}$, i.e., $\sum_{c}y_{ic}=1$ and $y_{ic} \geq 0$. For $\ell = N$, we only have access to $\{\mathbf{x}_{i}^{(Q_{\ell})}\}_{i=1}^{n_{\ell}}$, i.e., samples are not labeled. Our goal is to learn a classifier on $Q_{N}$, based on the available samples. We do so through \textbf{dataset dictionary learning}, i.e., we learn a set $\mathcal{P} = \{\hat{P}_{k}\}_{k=1}^{K}$ and $\mathcal{A} = \{\alpha_{\ell}\}_{\ell=1}^{N}$, $\alpha_{\ell} \in \Delta_{K}$ such that $\hat{Q}_{N} = \mathcal{B}(\alpha_{N};\mathcal{P})$, i.e., the barycenter in Wasserstein space of $\mathcal{P}$. We detail these ideas in the following.

\subsection{Background}\label{sec:background}

We start with \gls{ot}, a field of mathematics that, in a nutshell, studies the transportation of probability distributions under least effort. Our work is based on the Kantorovich formulation~\cite{kantorovich1942transfer}. For recent overviews of the theory, we refer readers to~\cite{peyre2019computational} and~\cite{montesuma2023recent}. In the following, we approximate client distributions empirically through,
\begin{align}
    \hat{Q}_{\ell}(\mathbf{x}) = \dfrac{1}{n_{\ell}}\sum_{i=1}^{n_{\ell}}\delta(\mathbf{x}-\mathbf{x}_{i}^{(Q_{\ell})}).\label{eq:empirical_approx}
\end{align}
where $\delta$ is the Dirac delta function. In this context, $\mathbf{X}^{(Q_{\ell})} \in \mathbb{R}^{n_{\ell}\times d}$ is called \emph{the support of} $\hat{Q}_{\ell}$. In this setting, the so-called \gls{ot} problem between distributions $\hat{P}$ and $\hat{Q}$ is,
\begin{equation}
\hat{\gamma} = \argmin{\gamma\in \Gamma(P,Q)} \sum_{i=1}^{n}\sum_{j=1}^{m}\gamma_{ij}C_{ij},\label{eq:KantorovichProblem}
\end{equation}
where $C_{ij} = c(\mathbf{x}_{i}^{(P)},\mathbf{x}_{j}^{(Q)})$ is called \emph{ground-cost matrix} and $c$ is a function modeling the \emph{effort of transportation} between samples $\mathbf{x}_{i}^{(P)} \myiid P$ and $\mathbf{x}_{j}^{(Q)} \myiid P$. The matrix $\gamma \in \mathbb{R}^{n\times m}$ is called \emph{transport plan}, and $\Gamma(P,Q)=\{\gamma:\sum_{i}\gamma_{ij}=m^{-1}\text{ and }\sum_{j}\gamma_{ij}=n^{-1}\}$.

Problem~\ref{eq:KantorovichProblem} is a linear program over the variables $\gamma_{ij}$. As such, it has $\mathcal{O}(n^{3}\log n)$ complexity over the number of samples~\cite{dantzig1983reminiscences}. A way of alleviating this complexity is using mini-batch \gls{ot}~\cite{fatras2020learning}, instead of calculating $\hat{\gamma}$ over all samples of $\hat{P}$ and $\hat{Q}$. In addition, $\gamma$ can be used for building a mapping between distributions $\hat{P}$ and $\hat{Q}$. In the discrete case, this takes the form of the \emph{barycentric projection}~\cite{courty2017otda},
\begin{align}
    T_{\gamma}(\mathbf{x}_{i}^{(P)}) = \argmin{\mathbf{x} \in \mathbb{R}^{d}}\sum_{j=1}^{m}\gamma_{ij}c(\mathbf{x},\mathbf{x}_{j}^{(Q)}),\label{eq:barymap}
\end{align}
when $c(\mathbf{x}_{i}^{(P)},\mathbf{x}_{j}^{(Q)}) = \lVert \mathbf{x}_{i}^{(P)} -\mathbf{x}_{j}^{(Q)} \rVert_{2}^{2}$ eq.~\ref{eq:barymap} can be conveniently expressed as $T_{\gamma}(\mathbf{X}^{(P)}) = n\gamma\mathbf{X}^{(Q)}$.

Based on the \gls{ot} solution, one has an associated \emph{cost} for transporting $\hat{P}$ to $\hat{Q}$, defined as,
\begin{align}
    \mathcal{T}_{c}(\hat{P},\hat{Q}) = \sum_{i=1}^{n}\sum_{j=1}^{m}\gamma_{ij}^{\star}C_{ij},\label{eq:transport_effort}
\end{align}
where $\gamma^{\star}$ is the solution of eq.~\ref{eq:KantorovichProblem}. Henceforth, we use $\mathcal{T}_{2}$ to eq.~\ref{eq:transport_effort} with $c = \lVert \cdot \rVert_{2}^{2}$. When $C_{ij} = \lVert \mathbf{x}_{i}^{(P)} - \mathbf{x}_{j}^{(Q)} \rVert^{p}_{2}$, one may define $W_{p}(\hat{P},\hat{Q}) = (\mathcal{T}_{c}(\hat{P},\hat{Q}))^{1/p}$, the Wasserstein distance between $\hat{P}$ and $\hat{Q}$, which \emph{inherits the metric properties from }$C$. This metric between distributions allows for the definition of \textbf{barycenters} of probability distributions~\cite{agueh2011barycenters},

\begin{definition}
For a set of distributions $\mathcal{P} = \{P_{k}\}_{k=1}^{K}$ and barycentric coordinates $\alpha \in \Delta_{K}$, the Wasserstein barycenter is a solution to,
\begin{align}
    B^{\star} = \mathcal{B}(\alpha;\mathcal{P}) = \inf{B}\sum_{k=1}^{K}\alpha_{k}W_{2}(P_{k}, B)^{2}.\label{eq:true_bary}
\end{align}
Henceforth we call $\mathcal{B}(\cdot;\mathcal{P})$ barycentric operator.
\end{definition}


Even though eq.~\ref{eq:true_bary} is continuous, the barycenter problem can be solved in terms of the support $\mathbf{X}^{(B)}$ of $B$, as described in~\cite{cuturi2013sinkhorn}. Next, we extend the \gls{ot} problem for handling labeled data. This is done by integrating the labels into the cost function~\cite{montesuma2021cvpr}. As~\cite{montesuma2023learning}, we use,
\begin{align}
    C_{ij} = \lVert \mathbf{x}_{i}^{(P)} - \mathbf{x}_{j}^{(Q)} \rVert_{2}^{2} + \beta \lVert \mathbf{y}^{(P)}_{i}-\mathbf{y}^{(Q)}_{j}\rVert_{2}^{2},\label{eq:supervised_ground_cost}
\end{align}
where $\beta > 0$ controls the importance of penalizing the transport between samples from different classes. Henceforth, we use $\mathcal{T}_{c}$ to denote eq.~\ref{eq:transport_effort} with $C_{ij}$ given by eq.~\ref{eq:supervised_ground_cost}. Furthermore, one may solve eq.~\ref{eq:true_bary} for $\hat{P}_{k}$ with support $(\mathbf{X}^{(P_{k})}, \mathbf{Y}^{(P_{k})})$, which yields $\hat{B}$ with support $(\mathbf{X}^{(B)},\mathbf{Y}^{(B)})$ where $\mathbf{Y}^{(B)} = \{\mathbf{y}_{i}^{(B)}\}_{i=1}^{n}$ are soft-labels, i.e., $\mathbf{y}_{i}^{(B)} \in \Delta_{n_{c}}$. This computation is done, for instance, with~\cite[Alg. 1]{montesuma2023learning}.

In~\cite{montesuma2023learning}, authors presented a novel dictionary learning framework over \emph{empirical distributions} (cf. eq.~\ref{eq:empirical_approx}). The \gls{dadil} framework introduces \emph{virtual distributions}, $\mathcal{P} = \{\hat{P}_{k}\}_{k=1}^{K}$, called atoms. Each $\hat{P}_{k}$ has a free-support, $(\mathbf{X}^{(P_{k})}, \mathbf{Y}^{(P_{k})})$, which is determined via optimization. The atoms are linked to true distributions $\hat{Q}_{\ell} \in \mathcal{Q}$ through Wasserstein barycenter $\mathcal{B}(\alpha_{\ell};\mathcal{P})$, where $\alpha_{\ell}$ are the barycentric coordinates allowing to \emph{reconstruct} $\hat{Q}_{\ell}$. Mathematically, \gls{dadil} is expressed as,
\begin{align}
    (\mathcal{P}^{\star},\mathcal{A}^{\star}) = \argmin{\mathcal{P},\mathcal{A}\in(\Delta_{K})^{N}}\dfrac{1}{N}\sum_{\ell=1}^{N}\mathcal{L}(\hat{Q}_{\ell}, \mathcal{B}(\alpha_{\ell};\mathcal{P})),\label{eq:dadil}
\end{align}
where $\mathcal{L} = \mathcal{T}_{c}$ if $\hat{Q}_{\ell}$ is labeled (i.e., $\ell \leq N-1$) or $\mathcal{L}=\mathcal{T}_{2}$ otherwise (i.e., $\ell = N$). In a nutshell, \gls{dadil} learns to approximate true distributions $\hat{Q}_{\ell}$ as the Wasserstein barycenter of atoms $\mathcal{P}$. In practice, \gls{dadil} relies on features extracted by a neural net, so that barycenters are calculated in a semantically rich latent space. Furthermore,~\cite{montesuma2023learning} minimizes eq.~\ref{eq:dadil} via mini-batches.

Our method parts from the following observation: \textbf{while the atoms $\mathcal{P}$ are shared by all domains, the barycentric coordinates $\alpha_{\ell}$ are specific to each domain, and thus do are not aggregated nor communicated throughout federated dictionary learning}. Next, we describe the two ingredients for our \gls{feddadil} strategy.

\subsection{Federated Learning an Encoder Network}\label{sec:learning_encoder}

\begin{figure*}[ht]
    \centering
    \begin{subfigure}{0.45\linewidth}
        \includegraphics[width=\linewidth]{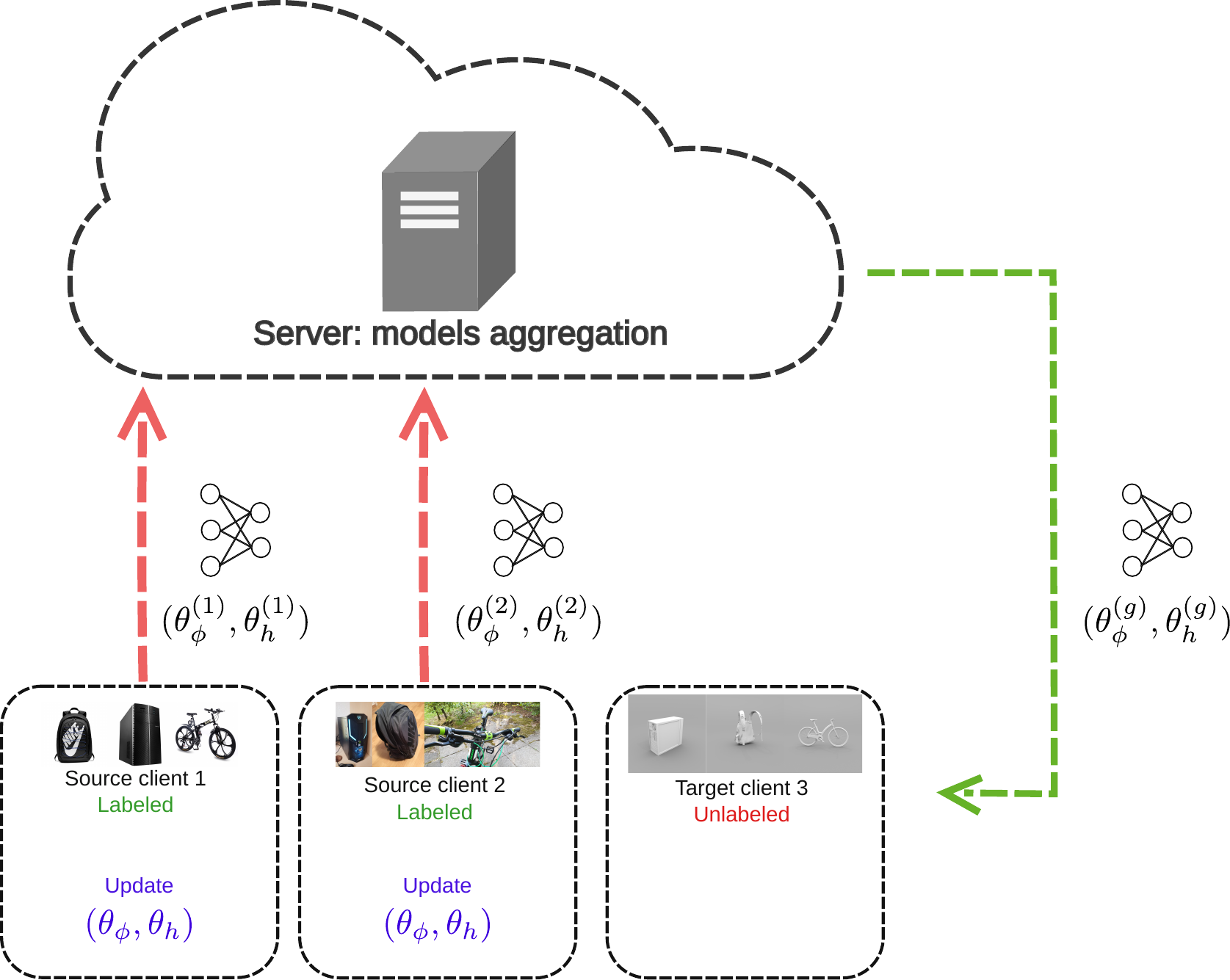}
        \caption{Step 1: FedAVG}
    \end{subfigure}\hfill
    \begin{subfigure}{0.45\linewidth}
        \includegraphics[width=\linewidth]{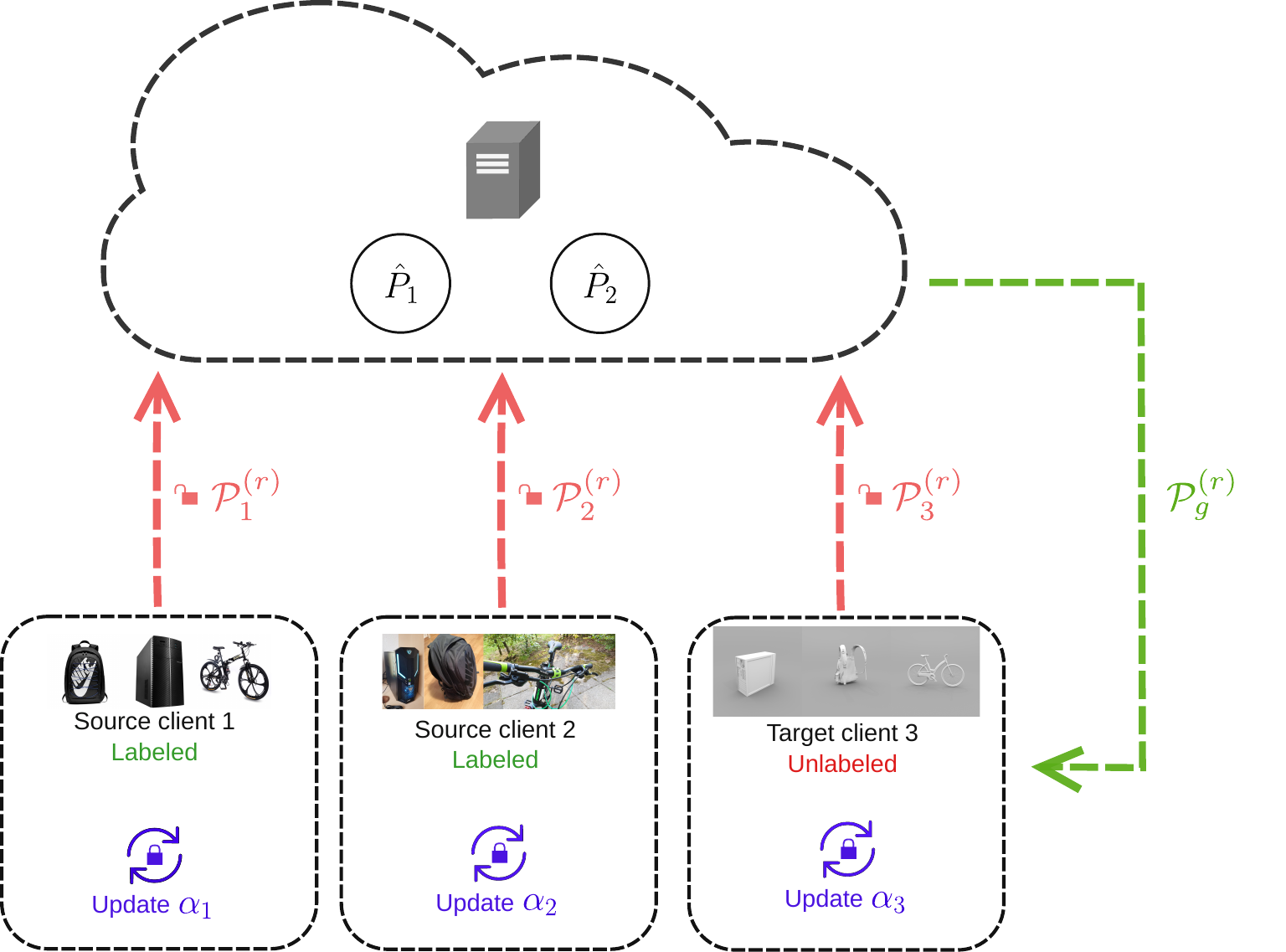}
        \caption{Step 2: \gls{feddadil}}
    \end{subfigure}
    \caption{Illustration of our decentralized \gls{msda} strategy. (a) We fit a neural deep neural network composed of an encoder net $\phi$ and a classifier $h$, without centralizing client data. In principle, the target client does not participate at this step, unless some adaptation method is used (e.g., KD3A~\cite{feng2021kd3a}). (b) We do the adaptation step with features extracted from the fine-tuned source model, through our proposed \gls{feddadil}.}
    \label{fig:feddadil}
\end{figure*}

Our \gls{feddadil} framework works over extracted features of a deep neural net. In a decentralized setting, clients cannot centralize their data for fine-tuning an existing architecture. To keep our overall pipeline \emph{end-to-end decentralized}, we choose to fine-tune an encoder using the FedAVG algorithm of~\cite{mcmahan2017communication}. Contrary to existing works~\cite{peng2019federated,feng2021kd3a}, \textbf{we do not align the probability distributions of clients' features}. This choice is important, because the dictionary learning step must have a rich variety of probability distributions for modeling distributional shift.

In this context, let $\phi:\mathcal{X}\rightarrow\mathcal{Z}$ be the encoder network, which takes as inputs images $\mathbf{x}_{i}^{(Q_{\ell})}$ and outputs a vector $\mathbf{z}_{i}^{(Q_{\ell})}=\phi(\mathbf{x}_{i}^{(Q_{\ell})}) \in \mathbb{R}^{d}$. Likewise, let $h:\mathcal{Z}\rightarrow\mathcal{Y}$ be a classifier. For instance, $\phi$ is the set of convolutional layers in a ResNet~\cite{he2016deep}, whereas $h$ is a single-layer Perceptron. We find the parameters $\theta_{\phi}$ and $\theta_{h}$ by minimizing,
\begin{equation}
(\theta_{\phi}^{\star},\theta_{h}^{\star}) =\argmin{\theta_{\phi}, \theta_{h}}\dfrac{1}{N-1}\sum_{\ell=1}^{N-1}\dfrac{1}{n_{\ell}}\sum_{i=1}^{n_{\ell}}\mathcal{L}(\mathbf{x}_{i}^{(Q_{\ell})},\mathbf{y}_{i}^{(Q_{\ell})};\theta_{\phi},\theta_{h}),\label{eq:fed_avg}
\end{equation}
where $\mathcal{L}(\mathbf{x}_{i}^{(Q_{\ell})},\mathbf{y}_{i}^{(Q_{\ell})};\theta_{\phi},\theta_{h}) = \sum_{c=1}^{n_{c}}y_{ic}\log h(\phi(\mathbf{x}_{i}))_{c}$ is the cross-entropy loss. Eq.~\ref{eq:fed_avg} is minimized in a federated way, i.e., each client $\ell \leq N-1$ minimizes the loss with respect its own data independently of others. We explicitly leave the target client out of the training process, since they do not have labeled data.

As discussed in~\cite{mcmahan2017communication}, the federated setting introduces a new level of complexity to the learning problem. As a consequence, before training begins, \textbf{a server} needs to initialize the parameters $\{\theta_{\phi}^{(\ell)},\theta_{h}^{(\ell)}\}_{\ell=1}^{N-1}$ with the same values. After a fixed number of $E$ training steps of each client, the server \textbf{aggregates clients weights}, by averaging clients' versions,
\begin{align}
    \theta^{(g)}_{\phi}=\dfrac{1}{N-1}\sum_{\ell=1}^{N-1}\theta_{\phi}^{(\ell)}\text{, and, }\theta^{(g)}_{h}=\dfrac{1}{N-1}\sum_{\ell=1}^{N-1}\theta_{h}^{(\ell)}.\label{eq:server_aggregation}
\end{align}
 
\subsection{Federated Dataset Dictionary Learning}\label{sec:fed_dadil}

We assume $\theta_{\phi}^{(g)}$ fixed, and $\mathbf{z}_{i}^{(Q_{\ell})} = \phi(\mathbf{x}_{i}^{(Q_{\ell})};\theta_{\phi}^{(g)})$. In \gls{feddadil}, each atom $\hat{P}_{k}$ has a free support, denoted as $(\mathbf{Z}^{(P_{k})}, \mathbf{Y}^{(P_{k})})$ and each client holds a set of barycentric coordinates $\alpha_{\ell} \in \Delta_{k}$. Hence,
\begin{align}
(\mathcal{P}^{\star},\mathcal{A}^{\star}) = \argmin{\mathcal{P},\mathcal{A}}\underbrace{\dfrac{1}{N}\sum_{\ell=1}^{N}f_{\ell}(\mathcal{P},\alpha_{\ell})}_{=f(\mathcal{P},\mathcal{A})},\label{eq:fed_dadil}
\end{align}
where $f_{\ell}$ is the objective function of each domain:
\begin{equation}
    f_{\ell}(\mathcal{P},\alpha_{\ell}) = \begin{cases}
        \mathcal{T}_{c}(\hat{Q}_{\ell},\mathcal{B}(\alpha_{\ell};\mathcal{P}))&  \ell\leq N-1,\\
        \mathcal{T}_{2}(\hat{Q}_{\ell},\mathcal{B}(\alpha_{\ell};\mathcal{P}))& \ell = N,\\
    \end{cases}\label{eq:local loss}
\end{equation}

While $\mathcal{P}$ are shared by all clients, the barycentric coordinates $\alpha_{\ell}$ are private to each client. Our federated strategy, presented in algorithm~\ref{alg:fed_dadil} is divided into two sub-routines: \emph{ClientUpdate} and \emph{ServerAggregate}.
\begin{algorithm}[ht]
\caption{\gls{feddadil}. The $N$ Clients are indexed by $\ell$. $n_{b}$ is the batch size, and $K$ the number of atoms.}
\begin{algorithmic}[1]
    \small
    \State Server initializes $\mathcal{P}_{g}^{(0)} = \{\hat{P}_{k}^{(0)}\}_{k=1}^{K}$
    \State clients initialize $\alpha_{\ell}^{(0)} \in \Delta_{K}$, $\forall \ell=1\cdots,N$
    \For{each round $r=1\cdots,R$}
        \For{client $\ell=1,\cdots,N$}
            \State Server communicates $\mathcal{P}_{g}^{(r)}$ to client $\ell$
            \State Initialize local dictionary $\mathcal{P}_{\ell}^{(0)} \leftarrow \mathcal{P}_{g}^{(r)}$
            \State $\mathcal{P}_{\ell}^{(r)} \leftarrow \text{ClientUpdate}(\mathcal{P}_{\ell}^{(0)},\alpha_{\ell}^{(r)})$
            \State client $\ell$ sends $\mathcal{P}_{\ell}^{(r)}$ to server.
        \EndFor
        \State $\mathcal{P}^{(r+1)}_{g} \leftarrow \text{ServerAggregate}(\{\mathcal{P}_{\ell}^{(r)}\}_{\ell=1}^{N})$
    \EndFor
\end{algorithmic}
\label{alg:fed_dadil}
\end{algorithm}

\noindent\textbf{Clients Update.} Similarly to FedAVG, at each communication round $r$, each client receives a global version from the server, noted as $\mathcal{P}_{g}^{(r)}$, which is copied into $\mathcal{P}_{\ell}^{(0)}$, the local version. The clients then proceed to optimize $(\mathcal{P}_{\ell}^{(0)}, \alpha_{\ell})$ through $E$ steps, by first splitting each $\hat{P}_{k}$ into $B = \lceil \nicefrac{n}{n_{b}} \rceil$ batches of size $n_{b}$. An epoch corresponds to an entire pass through the $B$ mini-batches. The loss is calculated between mini-batches of $\hat{P}_{k}$, and mini-batches of $\hat{Q}_{\ell}$. This is detailed in Algorithm~\ref{alg:client_update}. After each client step, it enforces $\alpha_{\ell} \in \Delta_{K}$ by projecting it orthogonally into the simplex.
\begin{algorithm}[ht]
\caption{ClientUpdate.}
\begin{algorithmic}[1]
    \small
    \For{local epoch $e=1,\cdots,E$}
        \For{batch $b=1,\cdots,B$}
            \State Let $\{ \{(\mathbf{z}_{i}^{(P_{k})},\mathbf{y}_{i}^{(P_{k})})\}_{i=b\times n_{b}}^{(b+1)\times n_{b}} \}_{k=1}^{K}$
            \State Sample $\{(\mathbf{z}_{i}^{(Q_{\ell})},\mathbf{y}_{i}^{(Q_{\ell})})\}_{i=1}^{n_{b}}$
            \State Compute loss $f_{\ell}(\mathcal{P}_{\ell}^{(e)},\alpha_{\ell})$
            \For{atom $k=1,\cdots,K$}
                \State $\mathbf{z}_{i}^{(P_{k})} \leftarrow \mathbf{z}_{i}^{(P_{k})} - \eta \nicefrac{\partial f_{\ell}}{\partial \mathbf{z}_{i}^{(P_{k})}}$
                \State $\mathbf{y}_{i}^{(P_{k})} \leftarrow \mathbf{y}_{i}^{(P_{k})} - \eta \nicefrac{\partial f_{\ell}}{\partial \mathbf{y}_{i}^{(P_{k})}}$
            \EndFor
            \State $\alpha_{\ell} \leftarrow \text{proj}_{\Delta_{K}}(\alpha_{\ell} - \eta \nicefrac{\partial f_{\ell}}{\partial \alpha_{\ell}})$
        \EndFor
    \EndFor
    \State Client sets $\alpha_{\ell}^{(r+1)} \leftarrow \alpha_{\ell}^{\star}$.
    \State \textbf{Return} $\mathcal{P}_{\ell}^{\star}$.
\end{algorithmic}
\label{alg:client_update}
\end{algorithm}

\noindent\textbf{Server Aggregation.} As the result of \emph{ClientUpdate}, at each communication round \gls{feddadil} has $N$ atom versions, $\{\mathcal{P}_{\ell}^{\star}\}_{\ell=1}^{N}$. As such, in the same way as \emph{FedAVG}~\cite{mcmahan2017communication}, one needs to \emph{aggregate} these different versions.  Given versions $\mathcal{P}_{0}$ and $\mathcal{P}_{1}$, we define the following arithmetic,
\begin{align*}
    \mathcal{P}_{0} + \alpha\mathcal{P}_{1} &:= \biggr{\{}\dfrac{1}{n}\sum_{i=1}^{n}\delta_{(\mathbf{z}_{i}^{(P_{k,0})} + \alpha \mathbf{z}_{i}^{(P_{k,1})}, \mathbf{y}_{i}^{(P_{k,0})} + \alpha \mathbf{y}_{i}^{(P_{k,1})})}\biggr{\}}_{k=1}^{K},
\end{align*}
i.e., we perform the summation w.r.t. the support of $\hat{P}_{k} \in \mathcal{P}$. The aggregation step is, therefore,
\begin{align}
    \mathcal{P}_{g}^{(r+1)} = \dfrac{1}{N}\sum_{\ell=1}^{N}\mathcal{P}_{\ell}^{\star}.\label{eq:atom_avg}
\end{align}
As we investigate in our experiments ($\S$~\ref{sec:experiments}), \gls{feddadil}'s objective behaves similarly to neural nets over interpolations of atom versions. Next, we present a novel result that shows that \gls{feddadil}'s objective behaves locally as a quadratic form,
\begin{theorem}\label{thm:dadil_loss}
    Let $(\mathcal{P},\mathcal{A})$ be a dictionary, and $\epsilon \in \mathbb{R}^{d}$ be a random perturbation. Let $\tilde{\mathcal{P}} = \{\tilde{P}_{k}\}_{k=1}^{K}$ such that,
    \begin{equation*}
        \tilde{P}_{k}(\mathbf{z},\mathbf{y}) = \dfrac{1}{n}\sum_{i=1}^{n}\delta((\mathbf{z},\mathbf{y}) - (\mathbf{z}_{j}^{(P_{k})}+\epsilon, \mathbf{y}_{j}^{(P_{k})})),
    \end{equation*}
    then,
    \begin{align*}
        f(\tilde{\mathcal{P}},\mathcal{A}) &= f(\mathcal{P},\mathcal{A}) + 2\epsilon^{T}\nabla_{x}f + \lVert \epsilon\rVert_{2}^{2},
    \end{align*}
\end{theorem}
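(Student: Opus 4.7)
The plan is to exploit two structural facts about optimal transport with quadratic ground cost: translation equivariance of the Wasserstein barycenter, and invariance of the optimal transport plan under uniform translations of one marginal. Together these reduce the proof to an algebraic expansion of the squared Euclidean norm, and no asymptotics in $\epsilon$ will be needed.

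First, I would show that the barycenter associated with $\tilde{\mathcal{P}}$ is itself a uniform $\mathbf{z}$-translate of the original: $\mathcal{B}(\alpha_\ell;\tilde{\mathcal{P}})$ has support $(\mathbf{z}_j^{(B_\ell)}+\epsilon,\mathbf{y}_j^{(B_\ell)})$, where $(\mathbf{z}_j^{(B_\ell)},\mathbf{y}_j^{(B_\ell)})$ is the support of $\mathcal{B}(\alpha_\ell;\mathcal{P})$. For any candidate support $(\mathbf{Z},\mathbf{Y})$ and coupling $\gamma$, the ground cost~\ref{eq:supervised_ground_cost} decouples additively in the $\mathbf{z}$- and $\mathbf{y}$-coordinates, and is invariant under joint translation of both marginals in the $\mathbf{z}$-direction. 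Hence the barycentric objective~\ref{eq:true_bary} evaluated at $B'$ on $\tilde{\mathcal{P}}$ equals the one evaluated at $B'-(\epsilon,0)$ on $\mathcal{P}$, so its minimizer is shifted accordingly while the label coordinates are unaffected.

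Next, I would expand the transport cost from $\hat{Q}_{\ell}$ to this shifted barycenter. Writing $\hat{B}_{\ell}=\mathcal{B}(\alpha_\ell;\mathcal{P})$, the ground cost between $\hat{Q}_{\ell}$ and $\mathcal{B}(\alpha_\ell;\tilde{\mathcal{P}})$ is
\begin{align*}
C_{ij}(\epsilon) = \lVert \mathbf{z}_i^{(Q_\ell)} - \mathbf{z}_j^{(B_\ell)} - \epsilon \rVert_2^2 + \beta\lVert \mathbf{y}_i^{(Q_\ell)} - \mathbf{y}_j^{(B_\ell)}\rVert_2^2 = C_{ij}(0) - 2\epsilon^T(\mathbf{z}_i^{(Q_\ell)} - \mathbf{z}_j^{(B_\ell)}) + \lVert \epsilon \rVert_2^2.
\end{align*}
For any $\gamma \in \Gamma(\hat{Q}_\ell,\hat{B}_\ell)$ with uniform marginals, $\sum_{ij}\gamma_{ij}=1$ and the $\mathbf{z}$-marginals of $\gamma$ are exactly the empirical means $\bar{\mathbf{z}}^{(Q_\ell)}$ and $\bar{\mathbf{z}}^{(B_\ell)}$, so
\begin{align*}
\sum_{i,j}\gamma_{ij}C_{ij}(\epsilon) = \sum_{i,j}\gamma_{ij}C_{ij}(0) - 2\epsilon^T(\bar{\mathbf{z}}^{(Q_\ell)} - \bar{\mathbf{z}}^{(B_\ell)}) + \lVert \epsilon\rVert_2^2.
\end{align*}
The correction is independent of $\gamma$, so the argmin is preserved and the identity passes to the optimal value $\mathcal{T}_c$ (and identically to $\mathcal{T}_2$ for $\ell=N$).

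Finally, averaging over $\ell=1,\dots,N$ yields
\begin{align*}
f(\tilde{\mathcal{P}},\mathcal{A}) = f(\mathcal{P},\mathcal{A}) + 2\epsilon^T\Bigl[\tfrac{1}{N}\sum_{\ell=1}^{N}(\bar{\mathbf{z}}^{(B_\ell)} - \bar{\mathbf{z}}^{(Q_\ell)})\Bigr] + \lVert \epsilon \rVert_2^2,
\end{align*}
and the bracketed vector is precisely $\nabla_x f$, namely the derivative of $\epsilon\mapsto f(\tilde{\mathcal{P}},\mathcal{A})$ at $\epsilon=0$, which by the chain rule equals the normalised sum of partial derivatives of $f$ with respect to the $\mathbf{z}$-coordinates of all atom support points. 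The main obstacle is the first step: rigorously justifying translation equivariance of the \emph{free-support} barycenter in the joint $(\mathbf{z},\mathbf{y})$ space and confirming that the label coordinates of the barycenter remain unchanged under a pure $\mathbf{z}$-translation of the atoms; once this equivariance is in hand, the remainder is a short expansion and the result becomes exact rather than asymptotic.
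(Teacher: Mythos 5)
Your proposal is correct and follows essentially the same route as the paper's proof: show that perturbing every atom support point by $\epsilon$ translates the barycenter support by $\epsilon$ (because the barycentric weights $n\sum_{k}\alpha_{k}\sum_{j}\pi_{ij}^{(k)}$ sum to one), then expand the squared Euclidean ground cost to obtain the exact quadratic identity. The only difference is that you explicitly justify why the optimal transport plans are unaffected (the correction term $-2\epsilon^{T}(\bar{\mathbf{z}}^{(Q_{\ell})}-\bar{\mathbf{z}}^{(B_{\ell})})+\lVert\epsilon\rVert_{2}^{2}$ is independent of the coupling, so the argmin is preserved), a step the paper's proof leaves implicit by keeping $\pi^{(k)}$ and $\pi$ fixed.
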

A proof of this result is available on our supplementary materials.

\noindent\textbf{Complexity.} Since algorithm~\ref{alg:client_update} runs over mini-batches, the overall computational complexity on clients is cubic over the size of mini-batches $n_{b}$~\cite[$\S$ 4]{montesuma2023learning}. Furthermore, at each round, clients' communicate
\begin{align}
    |\mathcal{P}| = K \times n \times (d + n_{c})\label{eq:n_params}
\end{align}
floating-point numbers at each round. As we discuss in our experiments, while we cannot avoid communicating models in FedAVG, the \gls{dadil} step is comparatively lightweight.

\subsection{Domain Adaptation}

The dictionary learned at the end of Algorithm~\ref{alg:fed_dadil} \emph{models the distributional shift} occurring between sources and target domains. Unlike previous works on decentralized \gls{msda}, we do not align the sources with the target. We rather \emph{embrace} distributional shift by modeling it. To learn a classifier at the target domain we have two strategies, Reconstruction (R) or Ensembling (E)~\cite{montesuma2023learning}, which we now describe. Both methods stem from the fact that each $\hat{P}_{k}$ \textbf{has a labeled support}, i.e., $(\mathbf{Z}^{(P_{k})}, \mathbf{Y}^{(P_{k})})$. The following methods can thus be applied locally by the target client for learning a classifier that works on data following its probability distribution.

\noindent\textbf{Reconstructing the Target Domain.} Through \gls{feddadil}, we can express $\hat{Q}_{N} = \mathcal{B}(\alpha_{N};\mathcal{P})$. Let $(\mathbf{Z}^{(B_{N})}, \mathbf{Y}^{(B_{N})})$ denote the support of $\hat{B}_{N} = \mathcal{B}(\alpha_{N};\mathcal{P})$. These can be expressed in terms of the support of each $\hat{P}_{k}$, as,
\begin{equation}
    \mathbf{z}^{(B_{N})}_{i} = n\sum_{k=1}^{K}\alpha_{N,k}\sum_{j=1}^{n}\pi_{i,j}^{(k)}\mathbf{z}_{j}^{(P_{k})}\text{, and, }
    \mathbf{y}^{(B_{N})}_{i} = n\sum_{k=1}^{K}\alpha_{N,k}\sum_{j=1}^{n}\pi_{i,j}^{(k)}\mathbf{y}_{j}^{(P_{k})},\label{eq:dadil_r}
\end{equation}
where $\pi_{i,j}^{(k)}$ is the \gls{ot} plan between $\hat{B}_{N}$ and $\hat{P}_{k}$. We can fit a classifier directly on the target client with samples $\{(\mathbf{z}_{i}^{(B_{N})}, \mathbf{y}_{i}^{(B_{N})})\}_{i=1}^{n}$ with standard \gls{erm},
\begin{align*}
    \hat{\theta}_{R} = \argmin{\theta}\dfrac{1}{n}\sum_{i=1}^{n}\mathcal{L}(h(\mathbf{z}_{i}^{(Q_{N})};\theta),\mathbf{y}_{i}^{(Q_{N})})
\end{align*}
\noindent\textbf{Ensembling Atom Classifiers.} Conversely, the target domain can fit a classifier on each atom distribution, through,
\begin{align*}
    \hat{\theta}_{k} = \argmin{\theta}\dfrac{1}{n}\sum_{i=1}^{n}\mathcal{L}(h(\mathbf{z}_{i}^{(P_{k})}; \theta), \mathbf{y}_{i}^{(P_{k})}).
\end{align*}
One can then exploit the weights $\{\alpha_{N,k}\}_{k=1}^{K}$ to weight the predictions of classifiers $\{h(\cdot;\hat{\theta}_{k})\}_{k=1}^{K}$,
\begin{align}
    h_{E}(\mathbf{z}) = \sum_{k=1}^{K}\alpha_{N,k}h(\mathbf{z};\hat{\theta}_{k}).\label{eq:dadil_e}
\end{align}
Note that since $h(\mathbf{z}; \hat{\theta}_{k}) \in \Delta_{n_{c}}$, $h_{E}(\mathbf{z}) \in \Delta_{n_{c}}$, i.e., the result of eqn.~\ref{eq:dadil_e} is still a probability distribution over classes.


\section{Experiments}\label{sec:experiments}

We provide a comparison of decentralized \gls{msda} methods on 5 visual adaptation benchmarks, namely: ImageCLEF~\cite{caputo2014imageclef}, Caltech-Office 10~\cite{gong2012geodesic}, Office 31~\cite{saenko2010adapting}, Office-Home~\cite{venkateswara2017deep} and Adaptiope~\cite{ringwald2021adaptiope}. We consider 3 methods from the decentralized \gls{msda} state-of-the-art, namely: FADA~\cite{peng2019federated}, KD3A~\cite{feng2021kd3a} and Co-MDA~\cite{liu2023co}. These methods were chosen due to their relevance, and availability of source code. Furthermore we consider adaptations of \gls{da} methods, such as $f$-DANN~\cite{ganin2016domain,peng2019federated} and $f-$WDGRL~\cite{shen2018wasserstein}. Further details on the benchmarks, and on the hyper-parameter settings of decentralized \gls{msda} methods are provided in the supplementary materials.

We compare \gls{feddadil} to other decentralized \gls{msda} strategies over 5 visual \gls{da} benchmarks. We present an overview of our results in table~\ref{tab:overview_benchmarks}. We use standard evaluation protocols in \gls{msda}, namely, we perform adaptation with a ResNet~\cite{he2016deep} backbone. The size of the backbone is selected to agree with previous research~\cite{peng2019federated,feng2021kd3a,liu2023co}. We run our experiments on a computer with a Ubuntu 22.04 OS, a 12th Gen Intel(R) Core\textsuperscript{TM} i9-12900H CPU with 64 GB of RAM, and with a NVIDIA RTX A100 GPU with 4GB of VRAM.



\begin{table}[ht]
    \centering
    \caption{Overview of Domain Adaptation benchmark}
    \begin{tabular}{llcccc}
        \toprule
        Benchmark & Backbone & \# Samples & \# Domains & \# Classes \\
        \midrule
        ImageCLEF & ResNet50 & 2400 & 4 & 12\\
        Caltech-Office 10 & ResNet101 & 2533 & 4 & 10\\
        Office31 & ResNet50 & 3287 & 3 & 31\\
        Office-Home & ResNet101 & 15500 & 4 & 65 \\
        Adaptiope & ResNet101 & 36900 & 3 & 123\\
        \bottomrule
    \end{tabular}
    \label{tab:overview_benchmarks}
\end{table}


In the following, we divide our experiments in six parts. First, we explore the optimization process of \gls{feddadil}. Second, we show empirically compare our method with prior art. Third we explore \gls{feddadil}'s performance with respect client parallelism. Fourth we explore the communication cost of our method. Fifth we analyze hyper-parameter sensitivity. Sixth we visualize the alignment between the target domain and its barycentric reconstruction.



\noindent\textbf{Decentralized Dataset Dictionary Learning.} Our empirical analysis is threefold: (i) how averaging different atom versions impacts our algorithm, (ii) visualizing the loss-landscape of $f(\mathcal{P},\mathcal{A})$ in a 2-D subspace and (iii) plotting \gls{feddadil}'s objective as a function of communication round. We illustrate our findings on the Caltech-Office 10 and Office 31 benchmarks.

First, given two versions $\mathcal{P}_{\ell}^{\star}$, $\ell=0, 1$, we define $\mathcal{P}_{t} = (1-t)\mathcal{P}_{0}^{\star} + t\mathcal{P}_{1}^{\star}$, $t \in [0, 1]$. Then, we proceed to evaluate $f(\mathcal{P}_{t},\mathcal{A})$ as a function of $t$, which is shown in Figure~\ref{fig:dadil_1d_loss}. Similarly to \emph{FedAVG}~\cite{mcmahan2017communication}, averaging different atom versions decreases the overall loss, which empirically validates our decentralized strategy.


Second, we explore Theorem 3.1. empirically. Let $(\mathcal{P}^{\star}, \mathcal{A}^{\star})$ be a dictionary. We define $(u, v)$ within $[-1.5, +1.5]^{2}$, and $(\mathcal{P}_{u}, \mathcal{P}_{v})$ such that, for $\epsilon \sim \mathcal{N}(0, \mathbf{I}_{d})$, $\mathbf{z}_{i}^{(P_{k,u})} := \epsilon$ and $\mathbf{y}_{i}^{(P_{k,u})} := \mathbf{y}_{i}^{(P_{k}^{\star})}$ (resp. $v$). Our analysis consists on visualizing,
\begin{align*}
    f(u, v) &= f(\mathcal{P}^{\star} + u\mathcal{P}_{u} + v\mathcal{P}_{v}, \mathcal{A}),
\end{align*}
i.e., randomly perturbing the features of the dictionary $\mathcal{P}^{\star}$, as in Theorem~\ref{thm:dadil_loss}. As shown in figure~\ref{fig:loss-landscape}, the loss has approximately quadratic level sets on the variables $(u, v)$.
\begin{figure}[ht]
    \centering
    \begin{subfigure}[t]{0.3\linewidth}
        \includegraphics[width=\linewidth]{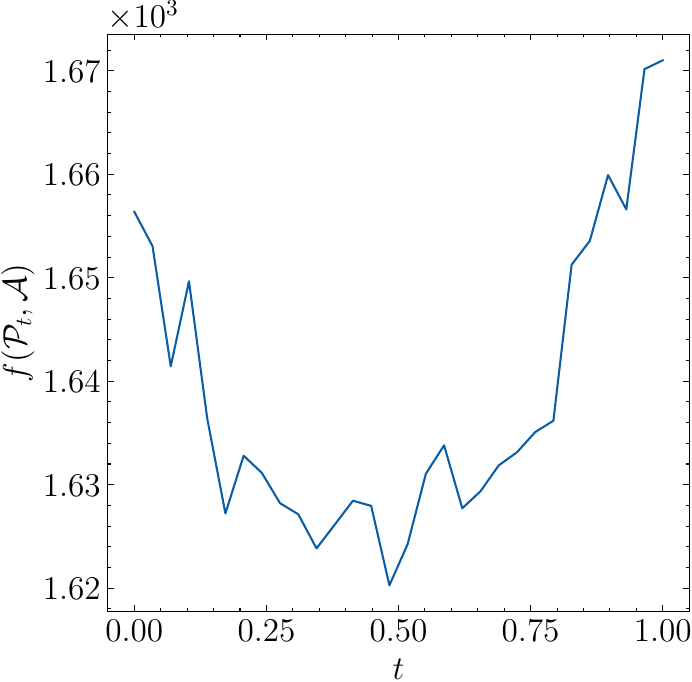}
        \caption{Interpolation of atom versions obtained by clients with same initialization.}
        \label{fig:dadil_1d_loss}
    \end{subfigure}\hfill
    \begin{subfigure}[t]{0.3\linewidth}
        \includegraphics[width=\linewidth]{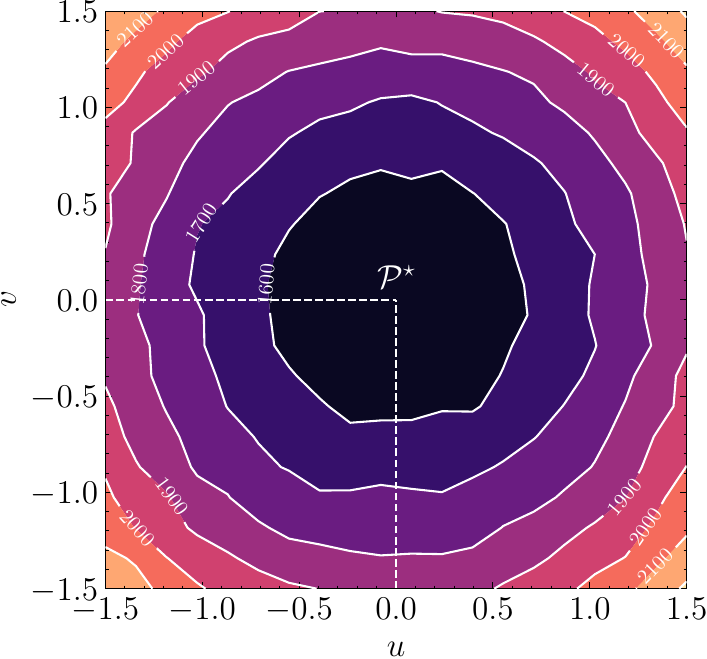}
        \caption{\gls{dadil}'s loss landscape. Colors represent loss values (marked by contour lines), whereas $(u,v)$ parametrize perturbations.}
        \label{fig:dadil_2d_loss}
    \end{subfigure}\hfill
    \begin{subfigure}[t]{0.3\linewidth}
        \includegraphics[width=\linewidth]{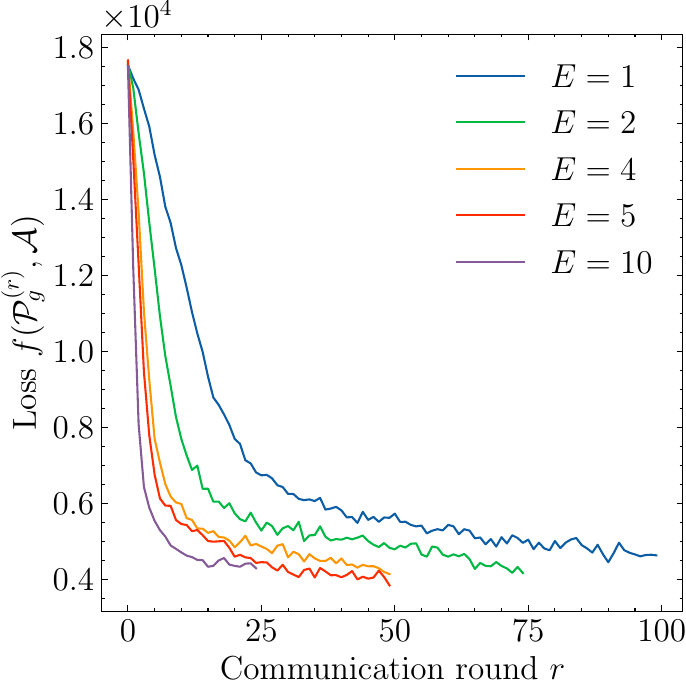}
        \caption{\gls{dil} loss function as a function of communication rounds for various values of local iterations $E$. Benchmark: Office 31, target: amazon}
        \label{fig:dil_loss_n_client_it}
    \end{subfigure}
    \caption{Analysis $1-$dimensional (a) and $2-$dimensional (b) of \gls{dadil}'s loss. (a) Similarly to \emph{FedAVG}~\cite{mcmahan2017communication}, interpolating between two atom versions obtained by clients with a shared initialization decreases the overall loss value. (b) We illustrate Theorem 3.1. empirically on Caltech-Office 10, showing that \gls{dadil}'s loss is locally quadratic.}
    \label{fig:loss-landscape}
\end{figure}
Finally, we analyze how the number of local iterations executed by the clients, $E$, impacts our federated \gls{dil} strategy. At one hand, for an increasing $E$, one parallelizes the \gls{dil} problem, as more iterations are done within clients before averaging the multiple atom versions. At the other hand, if the clients perform many local steps, they risk to overfit the atoms to their own data. Overall, for a wide range of values for $E$, we verify that \gls{dil}'s optimization converges towards a local minima, as shown in figure~\ref{fig:dil_loss_n_client_it}. This illustrate an advantage w.r.t. other decentralized \gls{msda} methods, such as KD3A~\cite{feng2021kd3a}, who fix $E=1$ in their experiments. We provide further analysis w.r.t. decentralized \gls{msda} performance in the next section.


\begin{table}[ht]
    \centering
    \caption{Experimental Results on decentralized \gls{msda} benchmarks. $\dagger$, $\ddag$ and $\star$ indicates results from~\cite{peng2019federated},~\cite{feng2021kd3a} and~\cite{liu2023co} respectively. $\uparrow$ denotes that higher is better. Additional details on our results are given in our supplementary materials.}
    \begin{subtable}{0.31\linewidth}
        \resizebox{\linewidth}{!}{\begin{tabular}{lcccc>{\columncolor[gray]{0.9}}c}
            \toprule
            Algorithm & \scriptsize{Amazon} & \scriptsize{dSLR} & \scriptsize{Webcam} & \scriptsize{Caltech} & \scriptsize{Avg. $\uparrow$} \\
            \midrule
            FedAVG & 86.1 & 98.3 & 99.0 & 87.8 & 92.8\\
            FedProx & 96.9 & 97.2 & 100.0 & 92.5 & 96.6\\
            \midrule
            $f$-DANN$^{\dagger}$ & 83.4 & 85.9 & 87.1 & 88.5 & 86.3 \\
            $f$-WDGRL & 97.9 & 97.1 & 100.0 & 95.6 & 97.6 \\
            FADA$^{\dagger}$ & 84.2 & 87.1 & 88.1 & 88.7 & 87.1 \\
            KD3A$^{\ddag}$ & 97.4 & 98.4 & 99.7 &  \underline{96.4} & 97.9 \\
            Co-MDA$^{\star}$ & \underline{98.2} & \textbf{100.0} & \textbf{100.0} & \textbf{96.9} & \textbf{98.8}\\
            \midrule
            FedDaDiL-E & \textbf{99.0} & \textbf{100.0} & \textbf{100.0} & 96.1 & \underline{98.7} \\
            FedDaDiL-R & \textbf{99.0} & \textbf{100.0} & \textbf{100.0} & 95.6 & 98.6 \\
            \bottomrule
        \end{tabular}}
        \caption{Caltech-Office 10.}
    \end{subtable}\hfill
    \begin{subtable}{0.31\linewidth}
        \resizebox{\linewidth}{!}{\begin{tabular}{lcccc>{\columncolor[gray]{0.9}}c}
            \toprule
            Algorithm & \scriptsize{Caltech} & \scriptsize{Bing} & \scriptsize{ImageNet} & \scriptsize{Pascal} & \scriptsize{Avg. $\uparrow$} \\
            \midrule
            FedAVG & 96.7 & \underline{65.8} & 94.2 & 77.5 & \underline{83.6}\\
            FedProx & \underline{96.7} & \underline{65.8} & 93.3 & 76.7 & 83.1\\
            \midrule
            $f$-DANN & 96.7 & 64.2 & 87.5 & \underline{80.0} & 82.1 \\
            $f$-WDGRL & 92.5 & 63.3 & 86.7 & 74.2 & 79.2 \\
            FADA & 95.0 & 64.2 & 90.0 & 74.2 & 80.9 \\
            KD3A & 93.3 & \textbf{69.2} & \textbf{95.5} & 73.3 & 82.8 \\
            Co-MDA & 94.2 & 65.0 & 91.5 & 78.0 & 82.2 \\
            \midrule
            FedDaDiL-E & \textbf{98.3} & \textbf{69.2} & {93.3} & \textbf{81.6} & \textbf{85.6}\\
            FedDaDiL-R & \textbf{98.3} & \textbf{69.2} & \underline{95.0} & \underline{80.0} & \textbf{85.6}\\
            \bottomrule
        \end{tabular}}
        \caption{ImageCLEF.}
    \end{subtable}\hfill
    \begin{subtable}{0.28\linewidth}
        \resizebox{\linewidth}{!}{\begin{tabular}{lccc>{\columncolor[gray]{0.9}}c}
            \toprule
            Algorithm & \scriptsize{Amazon} & \scriptsize{dSLR} & \scriptsize{Webcam} & \scriptsize{Avg. $\uparrow$} \\
            \midrule
            FedAVG & 67.5 & 95.0 & 96.8 & 86.4\\
            FedProx & 67.4 & 96.0 & 96.8 & 86.7\\
            \midrule
            $f$-DANN & 67.7 & 99.0 & 95.6 & 87.4 \\
            $f$-WDGRL & 64.8 & 99.0 & 94.9 & 86.2 \\
            FADA & 62.5 & 97.0 & 93.7 & 84.4 \\
            KD3A & 65.2 & \textbf{100.0} & \underline{98.7} & 88.0 \\
            Co-MDA & 64.8 & \underline{99.8} & \underline{98.7} & 87.8\\
            \midrule
            FedDaDiL-E & \textbf{71.2} & \textbf{100.0} & 98.2 & \underline{89.8} \\
            FedDaDiL-R & \underline{70.6} & \textbf{100.0} & \textbf{99.4} & \textbf{90.0} \\
            \bottomrule
        \end{tabular}}
        \caption{Office 31.}
    \end{subtable}\\
    \begin{subtable}{0.31\linewidth}
        \resizebox{\linewidth}{!}{\begin{tabular}{lcccc>{\columncolor[gray]{0.9}}c}
        \toprule
        Algorithm & \scriptsize{Art} & \scriptsize{Clipart} & \scriptsize{Product} & \scriptsize{Real-World} & \scriptsize{Avg. $\uparrow$} \\
        \midrule
        FedAVG & 72.9 & 62.2 & 83.7 & 85.0 & 76.0\\
        FedProx & 70.8 & 63.7 & 83.6 & 83.1 & 75.3\\
        \midrule
        $f$-DANN & 70.2 & \underline{65.1} & 84.8 & 84.0 & 76.0 \\
        $f$-WDGRL & 68.2 & 64.1 & 81.3 & 82.5 & 74.0 \\
        FADA & - & - & - & - & - \\
        KD3A & 73.8 & 63.1 & 84.3 & 83.5 & 76.2 \\
        Co-MDA$^{\star}$ & 74.4 & 64.0 & 85.3 & 83.9 & 76.9\\
        \midrule
        FedDaDiL-E & \underline{75.7} & {64.7} & \textbf{85.9} & \textbf{85.6} & \textbf{78.0} \\
        FedDaDiL-R & \underline{76.5} & \textbf{65.2} & \textbf{85.9} & \underline{84.2} & \underline{78.0} \\
        \bottomrule
    \end{tabular}}
        \caption{Office-Home.}
    \end{subtable}\hspace{2cm}
    \begin{subtable}{0.28\linewidth}
        \resizebox{\linewidth}{!}{\begin{tabular}{lccc>{\columncolor[gray]{0.9}}c}
        \toprule
        Algorithm & \scriptsize{Synthetic} & \scriptsize{Real} & \scriptsize{Product} & \scriptsize{Avg. $\uparrow$} \\
        \midrule
        FedAVG & 41.3 & 73.7 & 89.3 & 68.1\\
        FedProx & 38.5 & 70.6 & 87.9 & 65.7\\
        \midrule
        $f$-DANN & 42.2 & 71.6 & 89.1 & 67.4 \\
        $f$-WDGRL & 34.7 & 64.3 & 84.4 & 61.1 \\
        FADA & - & - & - & - \\
        KD3A & 49.6 & \textbf{83.3} & \textbf{92.1} & 75.0 \\
        Co-MDA & 39.3 & \underline{81.0} & 89.0 & 69.7\\
        \midrule
        FedDaDiL-E & 62.2 & 74.1 & \underline{91.1} & \textbf{75.8} \\
        FedDaDiL-R & \textbf{62.3} & 74.4 & 90.6 & \underline{75.4} \\
        \bottomrule
    \end{tabular}}
        \caption{Adaptiope.}
    \end{subtable}
    \label{tab:fed_da_results}
\end{table}

\noindent\textbf{Decentralized Visual Domain Adaptation.} In the following we refer to table~\ref{tab:fed_da_results}, where we summarize our comparisons. First, \emph{FedAVG} is a strong baseline when there is a limited amount of data. Indeed, on small benchmarks such as ImageCLEF and Office31, it performs better or equally than existing decentralized \gls{msda} algorithms. This is not true on larger benchmarks, such as Adaptatiope.

Second, non-i.i.d. federated learning methods, such as \emph{FedProx}~\cite{li2020federated} are approximately equivalent to \emph{FedAVG}. While decentralized \gls{msda} is also concerned with learning under non-i.i.d. data, the fact that these methods do not leverage target domain data hinders their performance

Third, our methods, \gls{feddadil}-E and R are able to outperform \emph{FedAVG} and other decentralized \gls{msda} benchmarks \textbf{without aligning domains}. Indeed, \gls{feddadil} works in a fundamentally different way than existing decentralized \gls{msda} algorithms, as it \emph{embraces distributional shift}. The adaptation is done by \emph{reconstructing} information on the target domain (\gls{feddadil}-R, cf. eq.~\ref{eq:dadil_r}), or by weighting predictions of atom distributions (\gls{feddadil}-E, cf. eq.~\ref{eq:dadil_e}). 

\noindent\textbf{Performance under Client Parallelism.} We analyze the performance of \gls{feddadil} as we increase the degree of client parallelism, which is a function of the number of local iterations $E$. As we verified in the last section, for an increasing $E$ \gls{feddadil} converges faster w.r.t. the rounds of communication $r$. In table~\ref{tab:client_parallelism}, we explore the performance of \gls{feddadil}-R and E in comparison with KD3A~\cite{feng2021kd3a} and Co-MDA~\cite{liu2023co} w.r.t. $E$. Note that, in these previous works, the authors fixed $E = 1$ in their experiments. They further verified a degradation in performance for an increasing $E$.


As we verify empirically in table~\ref{tab:client_parallelism}, there is indeed a degradation in KD3A and Co-MDA with an increasing $E$. Nonetheless, \gls{feddadil}-R and E keep approximately the same level of performance. As a result, on top of \emph{outperforming} current SOTA, our method is more resilient to parallelism.
\begin{table}[ht]
    \centering
    \caption{Adaptation performance w.r.t. client parallelism on Office 31 benchmark.}
    \begin{tabular}{lccccc}
        \toprule
         Method & $E=1$ & $E=2$ & $E=4$ & $E=5$ & $E=10$ \\
         \midrule
         KD3A & 87.8 & 86.7 & 86.0 & 85.4 & 65.9 \\
         CoMDA & 88.0 & 86.6 & 86.7 & 86.6 & 84.2 \\
         DaDiL-E & 89.9 & 89.9 & 89.2  & 89.8  & 88.9 \\
         DaDiL-R & 89.8 & 90.4 & 89.4 & 90.0 & 89.4 \\
         \bottomrule
    \end{tabular}
    \label{tab:client_parallelism}
\end{table}

\noindent\textbf{Communication Cost.} We compare the communication cost in bits at each round, between \gls{feddadil} and conventional decentralized \gls{msda} methods, such as FADA~\cite{peng2019federated} and KD3A~\cite{feng2021kd3a}. To do so, we calculate the total number of parameters in \gls{feddadil}, which, as we discussed in section~\ref{sec:fed_dadil}, corresponds to $|\mathcal{P}| = K \times n \times (d + n_{c})$. Further details on the choice of $K, n$ and $n_{b}$ are given in our supplementary materials. We then calculate the number of bits used to communicate these parameters, using 32-bit floating point precision, and divide it by the number of bits used to encode the backbone network, using the same precision. Our results are summarized in figure~\ref{fig:comm_cost}.
\begin{figure}[ht]
\centering
\begin{subfigure}{0.3\linewidth}
    \includegraphics[width=\linewidth]{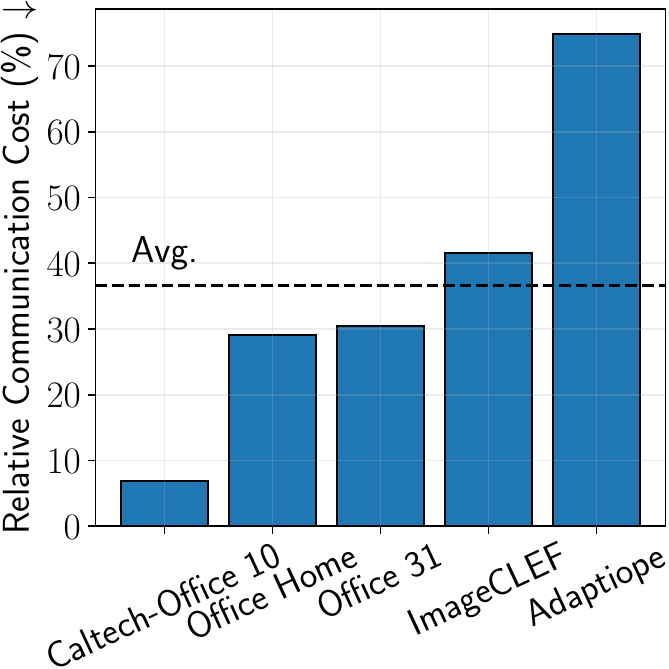}
    \caption{Communication cost.}
    \label{fig:comm_cost}
\end{subfigure}\hfill
\begin{subfigure}{0.35\linewidth}
    \includegraphics[width=\linewidth]{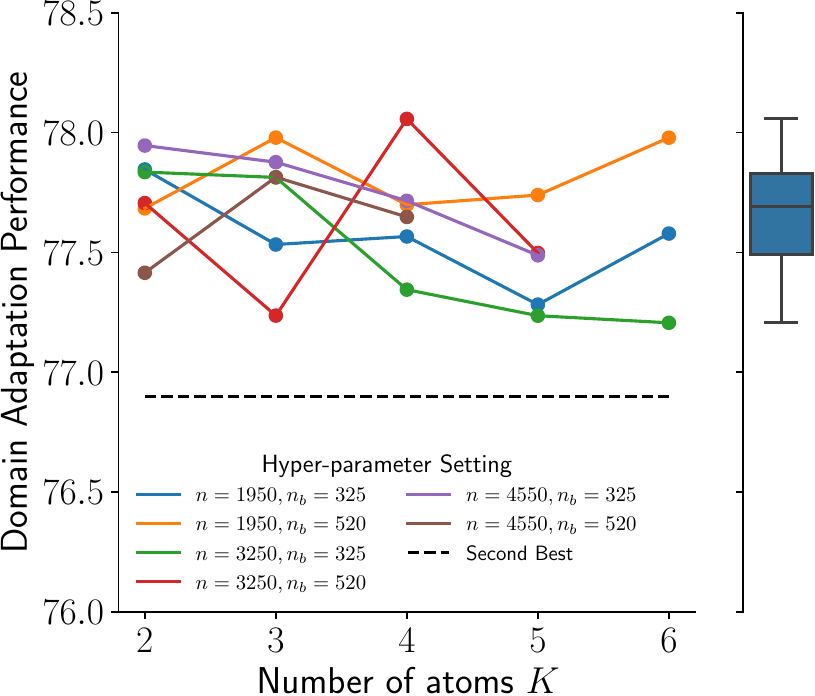}
    \caption{Hyper-parameter sensitivity.}
    \label{fig:hyperparam}
\end{subfigure}
\caption{In (a), we show the communication cost in \% relative to the cost of communicating the parameters of the backbone. In (b), we show the hyper-parameter sensitivity of \gls{dadil} on the Office-Home benchmark.}
\end{figure}

With respect figure~\ref{fig:comm_cost}, while we cannot avoid communicating the whole networks during step 1 (\emph{FedAVG}), the \gls{dadil} step communicates much less parameters than the network used to encode its inputs. As a result, taking into account table~\ref{tab:client_parallelism} and figure~\ref{fig:comm_cost}, the \gls{dadil} step has better communication efficiency than previous decentralized \gls{msda} methods.

\noindent\textbf{Parameter Sensitivity.} Here, we evaluate the sensitivity of \gls{feddadil} adaptation performance with respect its hyper-parameters, namely, number of samples $n$, batch size $n_{b}$ and number of atoms $K$. On total, this generates $36$ possible models, whose performance is shown in figure~\ref{fig:hyperparam}. Note that, over the chosen range of hyper-parameters, \gls{feddadil} has a performance of $77.6 \pm 0.22$, well above the second-best method (KD3A~\cite{feng2021kd3a}, with $76.9\%$ of domain adaptation performance). This shows that, overall, our proposed algorithm is robust with respect the choice of its hyper-parameters. We provide the complete list of hyper-parameters (over all benchmarks) in the supplementary materials. Furthermore, $n$ and $K$ control the complexity of our dictionary. From figure~\ref{fig:hyperparam}, we see that a small number of atoms (e.g., $K \leq 3$) and samples (e.g., $n=1950$) yields the best results. In these cases, we are strictly reducing the total number of samples, as $1950 \times 3  = 5850 < 15500$ in the Office-Home benchmark.

\noindent\textbf{Feature Visualization.} We compare the alignment of distributions with our method and those of \gls{kd3a} and \gls{fada}. In our case, we align the target with $\mathcal{B}(\alpha_{N};\mathcal{P})$, whereas other methods align the target with the source domains. We visualize these alignments by embedding the samples in $\mathbb{R}^{2}$ through t-SNE~\cite{van2008visualizing} (c.f., figure~\ref{fig:domain_alignment}). While the alignment towards domains Webcam and dSLR works well, aligning the sources with Amazon is more challenging. In this case \gls{feddadil} manages to reconstruct it with dictionary learning, which explains its superior performance.

\noindent\textbf{Dataset Distillation.} We explore the use of \gls{dadil} for dataset distillation~\cite{sachdeva2023data}, i.e., creating a reduced summary for a given dataset. We do so in an \gls{fda} setting, i.e., without labeled sampels in the target domain being summarized. As such, we analyze $\hat{B}_{T}$ as a function of $n_{B} = n_{c} \times \text{SPC}$, for $\text{SPC} \in \{1, 10, 20\}$.

In figure~\ref{fig:tsne_reconstruction} (a-c), we analyze points in $\hat{B}_{T}$ in comparison with $\hat{Q}_{T}$ through t-SNE. We use the synthetic domain in Adaptiope as target domain. As SPC grows, one captures progressively better the target distribution. Furthermore, the confidence of reconstructed labels, measured through the entropy, increases with SPC, as shown in figures~\ref{fig:tsne_reconstruction} (d-f) and (g-i). These results agree with previous research involving distillation and Wasserstein barycenters~\cite{montesuma2024multi}.

Finally, we analyze the performance of summaries created through \gls{dadil} in figures~\ref{fig:tsne_reconstruction} (j-l), in comparison with random sampling the sources and target domain. \gls{dadil} summaries' performance increases rapidly with \gls{spc}, but becomes constant as we increase the amount of data. As such, \gls{dadil} achieves novel performance with a summary with $1.67\%$ of the total amount of samples.
\clearpage

\begin{figure}[ht]
\centering
\begin{subfigure}[t]{\dimexpr0.23\textwidth+20pt\relax}
    \makebox[20pt]{\raisebox{40pt}{\rotatebox[origin=c]{90}{Amazon}}}%
    \includegraphics[width=\dimexpr\linewidth-20pt\relax]{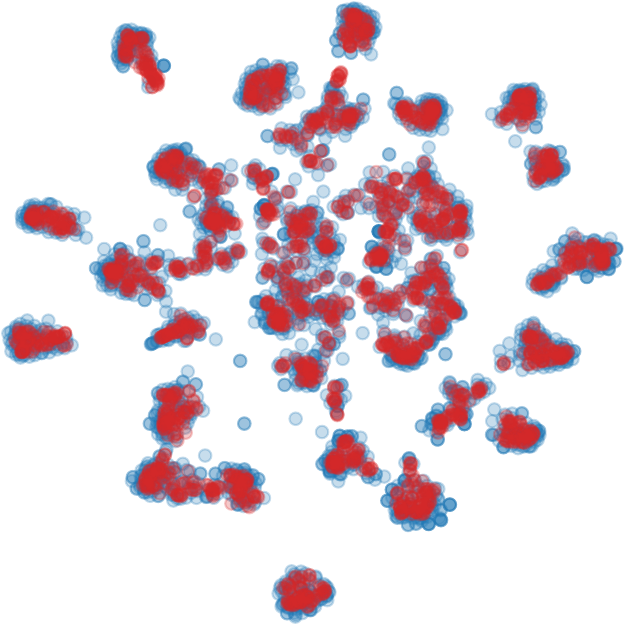}
    \makebox[20pt]{\raisebox{40pt}{\rotatebox[origin=c]{90}{dSLR}}}%
    \includegraphics[width=\dimexpr\linewidth-20pt\relax]{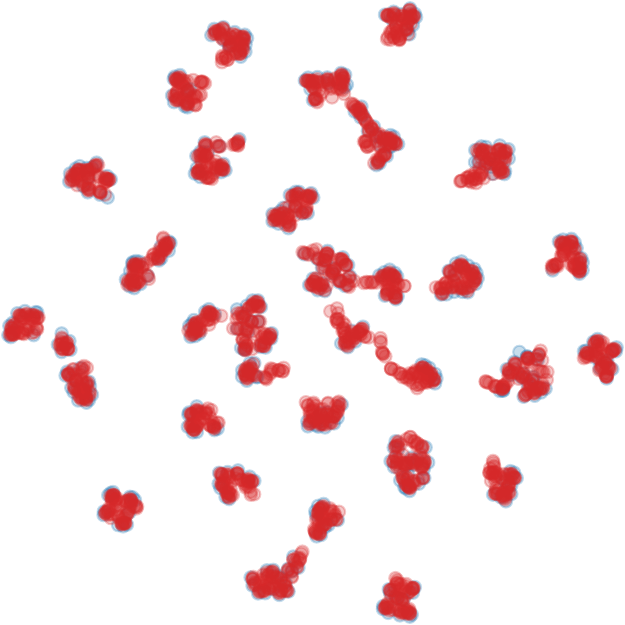}
    \makebox[20pt]{\raisebox{40pt}{\rotatebox[origin=c]{90}{Webcam}}}%
    \includegraphics[width=\dimexpr\linewidth-20pt\relax]{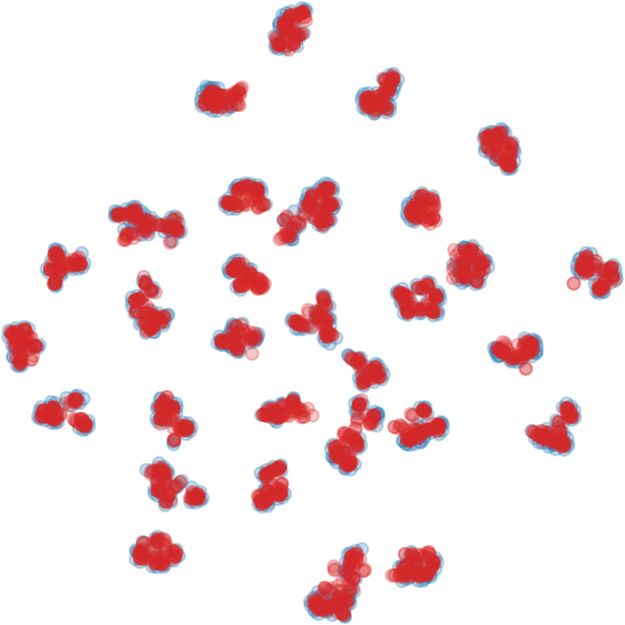}
    \caption{D$^{2}$adil}
\end{subfigure}\hfill
\begin{subfigure}[t]{0.23\textwidth}
    \includegraphics[width=\textwidth]{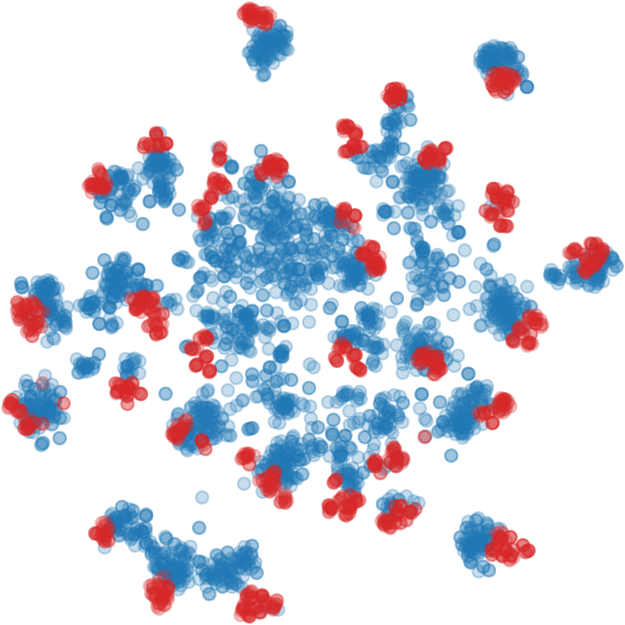}
    \includegraphics[width=\textwidth]{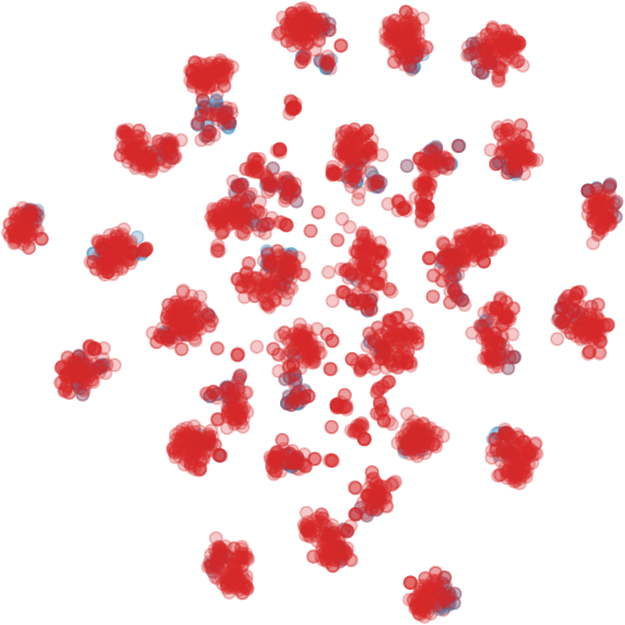}
    \includegraphics[width=\textwidth]{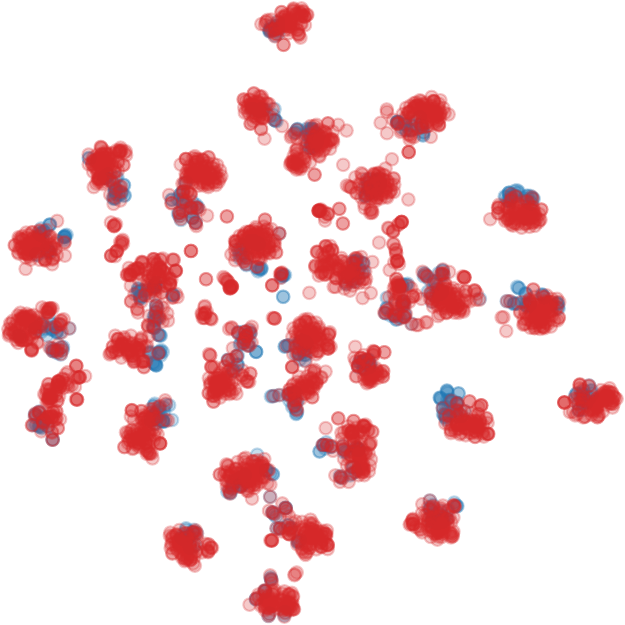}
    \caption{KD3A}
\end{subfigure}\hfill
\begin{subfigure}[t]{0.23\textwidth}
    \includegraphics[width=\textwidth]{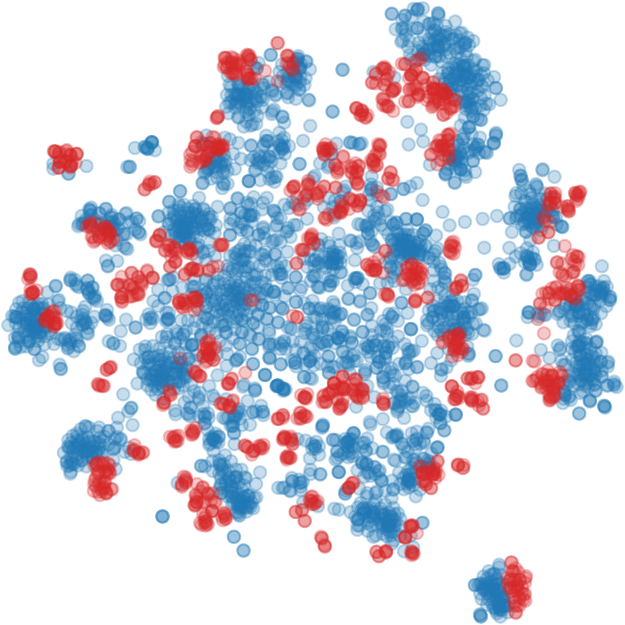}
    \includegraphics[width=\textwidth]{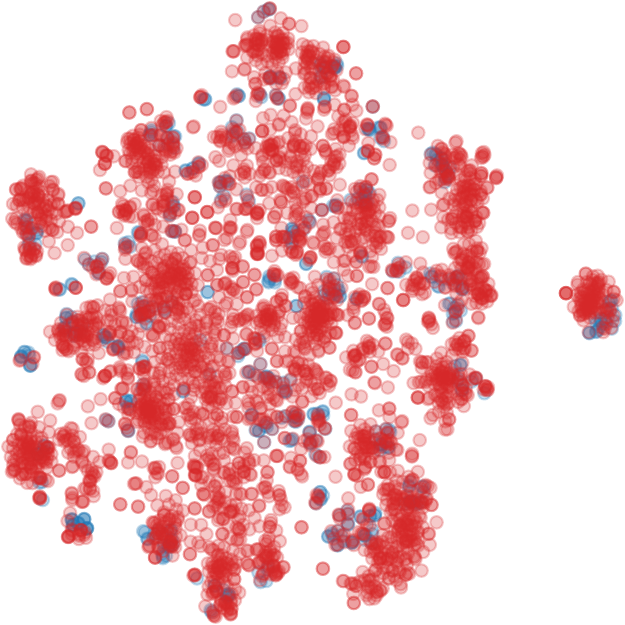}
    \includegraphics[width=\textwidth]{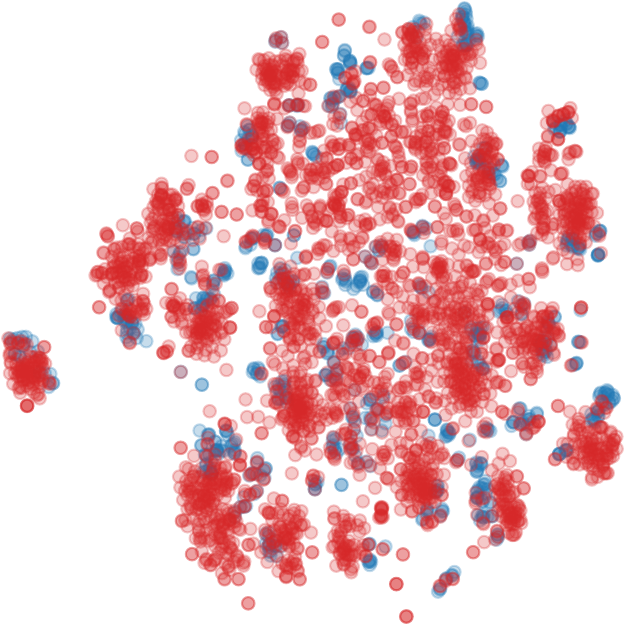}
    \caption{FADA}
\end{subfigure}
\caption{t-SNE embeddings of distribution alignments of \gls{feddadil}, KD3A and FADA. Blue points correspond to target domain points, whereas red points correspond to samples in the barycenter support (\gls{feddadil}) and source domains (KD3A and FADA).}
\label{fig:domain_alignment}
\end{figure}



\begin{figure}[H]
    \centering
    \begin{subfigure}{0.28\linewidth}
        \includegraphics[width=\linewidth]{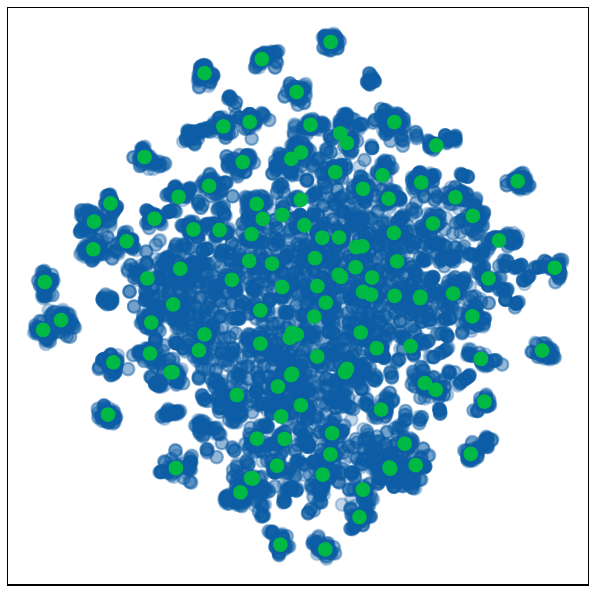}
        \caption{SPC=$1$}
    \end{subfigure}
    \begin{subfigure}{0.28\linewidth}
        \includegraphics[width=\linewidth]{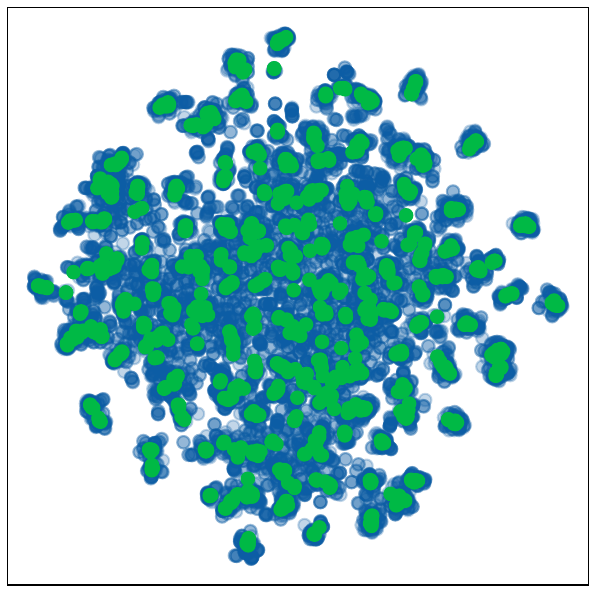}
        \caption{SPC=$10$}
    \end{subfigure}
    \begin{subfigure}{0.28\linewidth}
        \includegraphics[width=\linewidth]{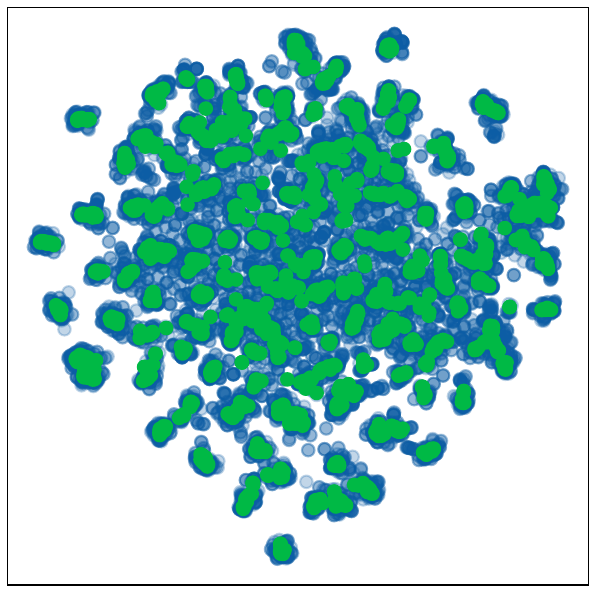}
        \caption{SPC=$20$}
    \end{subfigure}\\
    \begin{subfigure}{0.28\linewidth}
        \includegraphics[width=\linewidth]{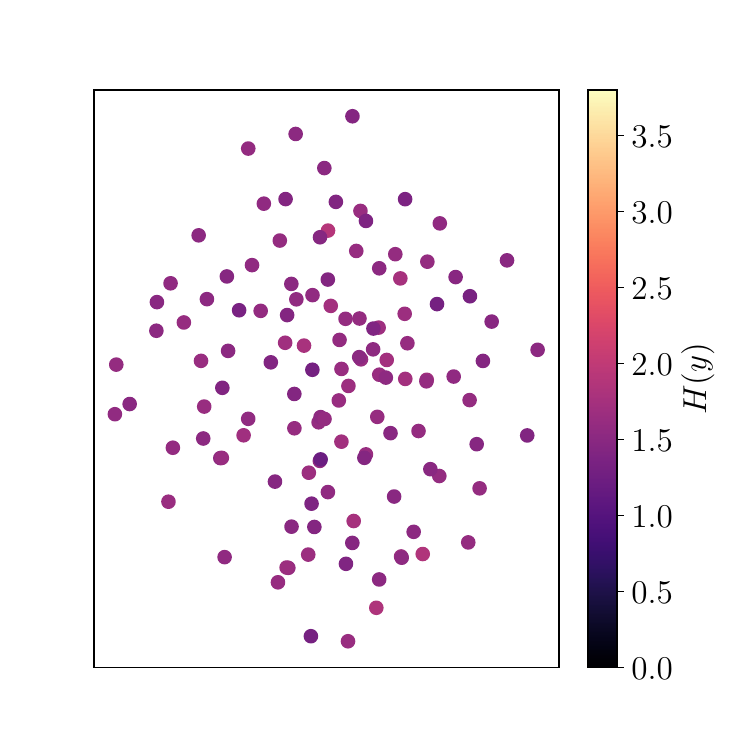}
        \caption{SPC=$1$}
    \end{subfigure}
    \begin{subfigure}{0.28\linewidth}
        \includegraphics[width=\linewidth]{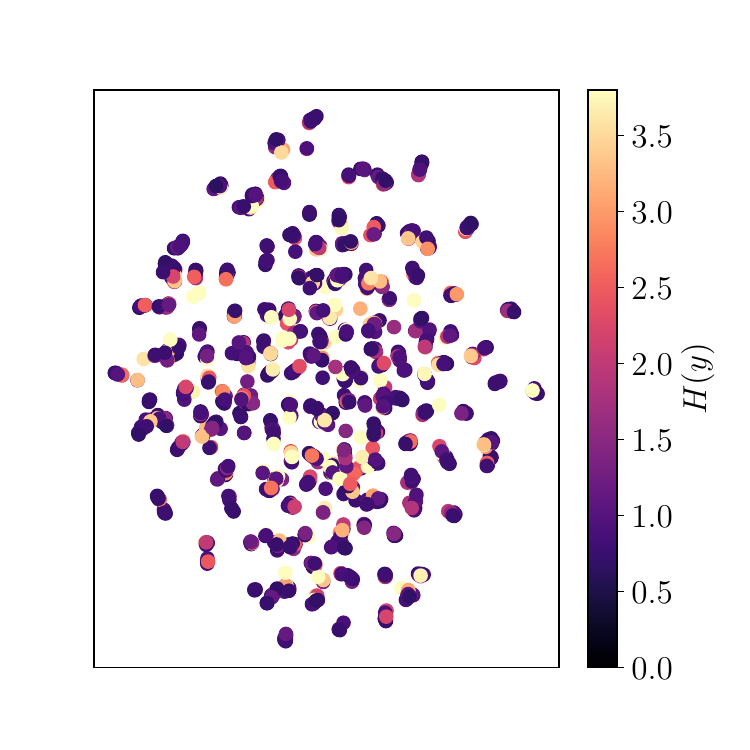}
        \caption{SPC=$10$}
    \end{subfigure}
    \begin{subfigure}{0.28\linewidth}
        \includegraphics[width=\linewidth]{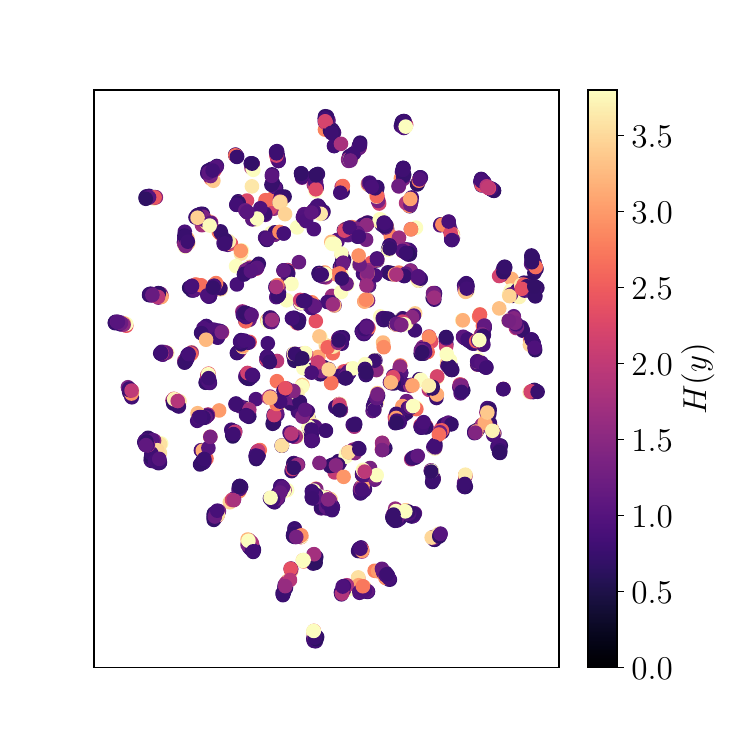}
        \caption{SPC=$20$}
    \end{subfigure}\\
    \begin{subfigure}{0.28\linewidth}
        \includegraphics[width=\linewidth]{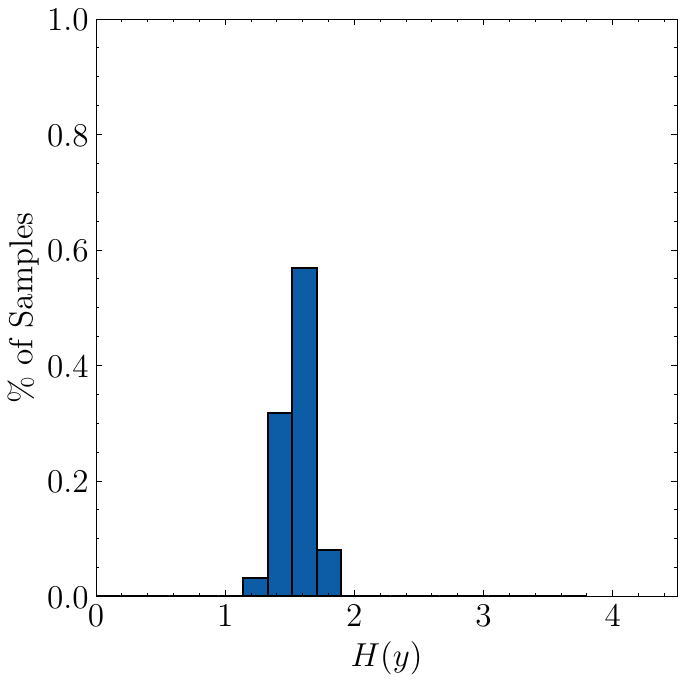}
        \caption{SPC=$1$}
    \end{subfigure}\hfill
    \begin{subfigure}{0.28\linewidth}
        \includegraphics[width=\linewidth]{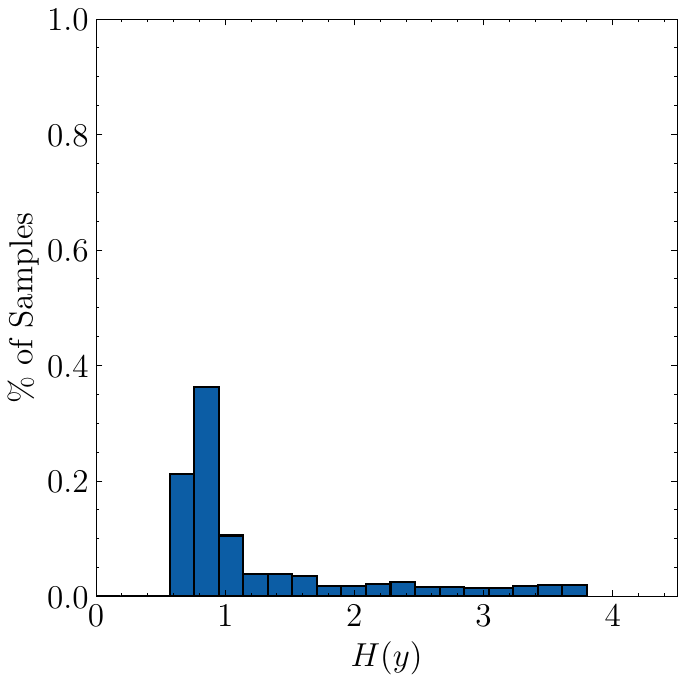}
        \caption{SPC=$10$}
    \end{subfigure}\hfill
    \begin{subfigure}{0.28\linewidth}
        \includegraphics[width=\linewidth]{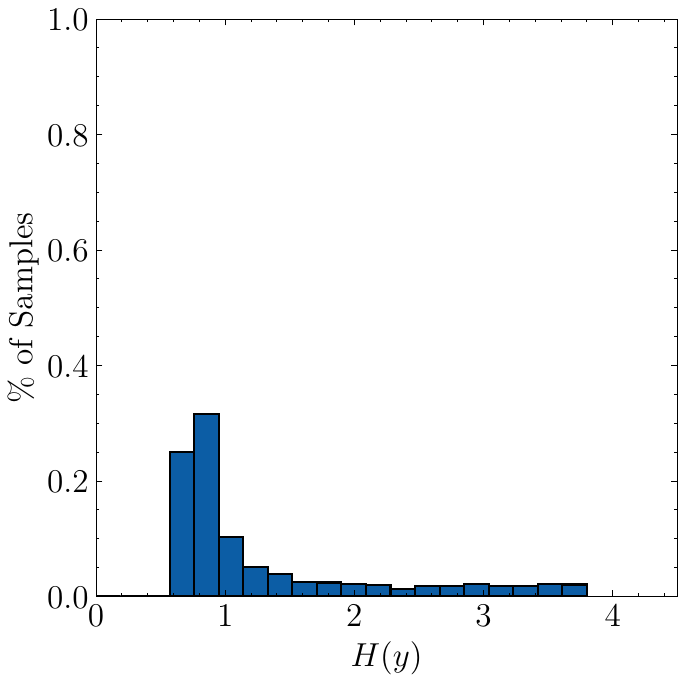}
        \caption{SPC=$20$}
    \end{subfigure}\\
    \begin{subfigure}{0.28\linewidth}
        \includegraphics[width=\linewidth]{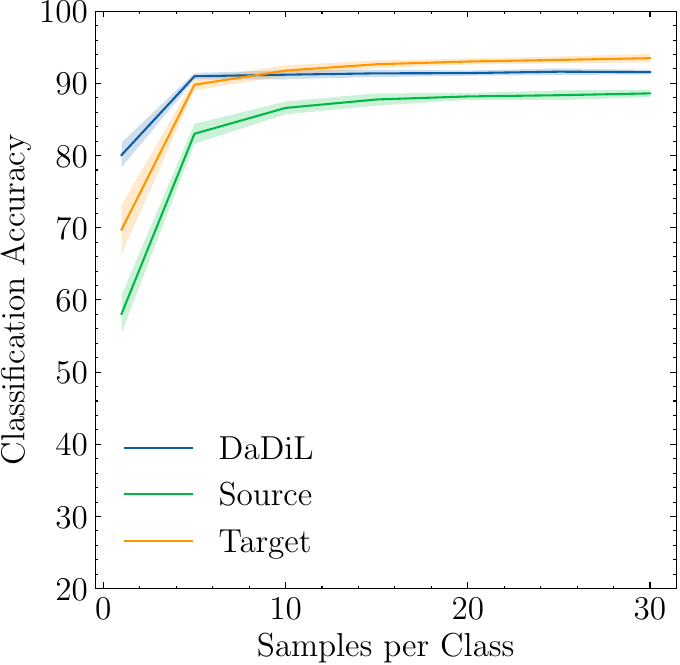}
        \caption{Target: Product}
    \end{subfigure}\hfill
    \begin{subfigure}{0.28\linewidth}
        \includegraphics[width=\linewidth]{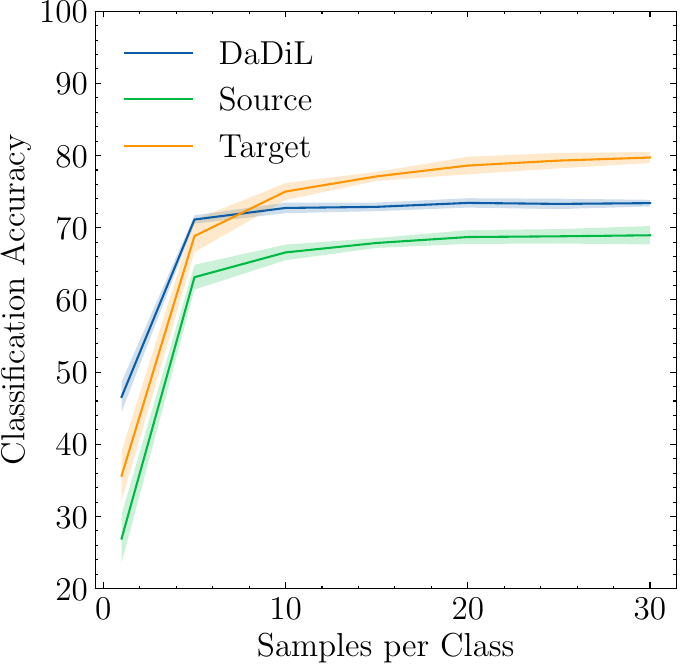}
        \caption{Target: Real}
    \end{subfigure}\hfill
    \begin{subfigure}{0.28\linewidth}
        \includegraphics[width=\linewidth]{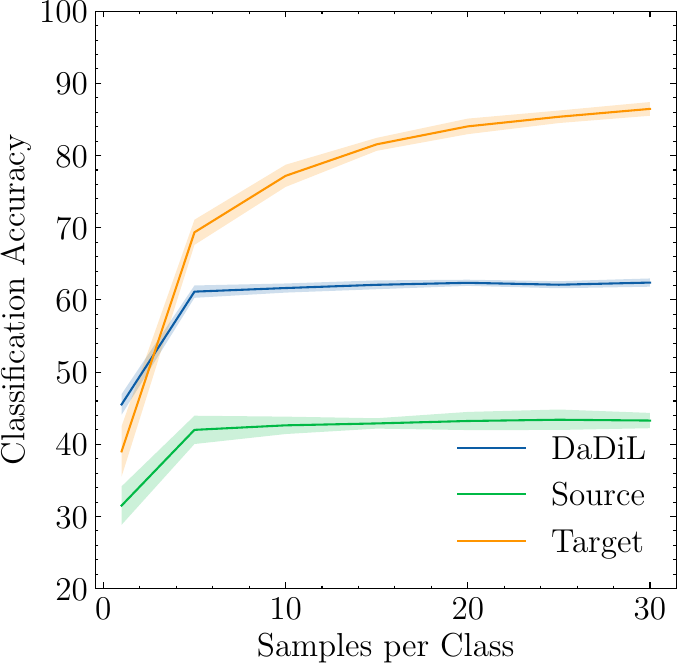}
        \caption{Target: Synthetic}
    \end{subfigure}
    \caption{Dataset distillation on Adaptiope benchmark. (a-c) show a comparison between real (blue) and reconstructed data points (green). (d-f) show the entropy of labels of reconstructed data points. For low values of \gls{spc}, samples have higher label entropy. (g-i) show the distribution of label entropies, in line with the conclusion of (d-f). Finally, (j-l) compares the performance of distillation with samples generated by \gls{dadil}, in comparison to random sub-sampling the source (green) and target (yellow). For \gls{spc}$=5$, one reaches state-of-the-art performance. This represents around 1.67\% of the total amount of samples}
    \label{fig:tsne_reconstruction}
\end{figure}

\section{Conclusion}\label{sec:conclusion}

We propose a novel federated algorithm for learning dictionaries of empirical distributions for federated domain adaptation. The main idea of our approach is keeping server atoms public, whereas clients' barycentric coordinates are private. Our strategy is based on two steps. First, one leans a neural net encoder through standard \emph{FedAVG}~\cite{mcmahan2017communication}. Second, we rethink the \gls{dadil} strategy~\cite{montesuma2023learning} in a federated setting. Our \emph{end-to-end} decentralized \gls{da} strategy improves adaptation performance on 5 visual \gls{da} benchmarks (table~\ref{tab:fed_da_results}). On top of that, we show that our strategy handles \emph{client parallelism} better than previous works (figure~\ref{fig:dil_loss_n_client_it} and table~\ref{tab:client_parallelism}), while being relatively lightweight in comparison with communication deep neural nets' parameters (figure~\ref{fig:comm_cost}). We further show that our method is robust with respect its hyper-parameters (figure~\ref{fig:hyperparam}), and that the alignment between the target domain and its reconstruction is better than with source domains (figure~\ref{fig:domain_alignment}).




\bibliographystyle{unsrt}
\bibliography{references}  

\newpage

\appendix

\section{Introduction}

This appendix is organized as follows. Section 2 provides a proof for Theorem 3.1, and Section 3 provides further details on how we experiment with third-party code.

\section{Proof of Theorem 3.1}

\begin{theorem}
    Let $(\mathcal{P},\mathcal{A})$ be a dictionary, and $\epsilon \in \mathbb{R}^{d}$ be a random perturbation. Let $\tilde{\mathcal{P}} = \{\tilde{P}_{k}\}_{k=1}^{K}$ s.t.,
    \begin{equation*}
        \tilde{P}_{k}(\mathbf{z},\mathbf{y}) = \dfrac{1}{n}\sum_{i=1}^{n}\delta((\mathbf{z},\mathbf{y}) - (\mathbf{z}_{j}^{(P_{k})}+\epsilon, \mathbf{y}_{j}^{(P_{k})})),
    \end{equation*}
    then,
    \begin{align*}
        f(\tilde{\mathcal{P}},\mathcal{A}) &= f(\mathcal{P},\mathcal{A}) + 2\epsilon^{T}\nabla_{x}f + \lVert \epsilon\rVert_{2}^{2},
    \end{align*}
\end{theorem}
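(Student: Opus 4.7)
My strategy is to exploit two exact properties of the quadratic Wasserstein geometry -- translation equivariance of the free-support barycenter and plan-independence of the linear and constant terms in the expansion of the shifted Kantorovich cost -- which together will make the claimed identity exact for every $\epsilon$, not merely to first order.

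First I would show that $\tilde B_\ell := \mathcal{B}(\alpha_\ell;\tilde{\mathcal{P}})$ has support $(\mathbf{Z}^{(B_\ell)}+\epsilon, \mathbf{Y}^{(B_\ell)})$, where $B_\ell := \mathcal{B}(\alpha_\ell;\mathcal{P})$. Since the ground cost in eq.~\ref{eq:supervised_ground_cost} is additively separable in features and labels and the feature block is the squared Euclidean norm, the change of variable $B \to B + \epsilon$ in the barycenter functional $\sum_k \alpha_k W_2^2(\tilde P_k, B)$ reproduces the unperturbed functional, so its minimiser is translated by $\epsilon$. The same substitution shows that the optimal plans between $\tilde P_k$ and $\tilde B_\ell$ coincide with those between $P_k$ and $B_\ell$; since the label support of the barycenter is obtained by the barycentric projection of atom labels through these plans (Alg.~1 of~\cite{montesuma2023learning}), $\mathbf{Y}^{(B_\ell)}$ is unchanged.

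Next I would expand $\mathcal{T}_c(\hat Q_\ell, \tilde B_\ell)$: for any feasible $\gamma \in \Gamma(\hat Q_\ell, \tilde B_\ell)$, writing $\mu_{Q_\ell} := \tfrac{1}{n_\ell}\sum_i\mathbf{z}_i^{(Q_\ell)}$ and $\mu_{B_\ell} := \tfrac{1}{n}\sum_j\mathbf{z}_j^{(B_\ell)}$,
\begin{equation*}
\sum_{ij}\gamma_{ij}\lVert\mathbf{z}_i^{(Q_\ell)} - \mathbf{z}_j^{(B_\ell)} - \epsilon\rVert_2^2 = \sum_{ij}\gamma_{ij}\lVert\mathbf{z}_i^{(Q_\ell)} - \mathbf{z}_j^{(B_\ell)}\rVert_2^2 - 2\epsilon^T(\mu_{Q_\ell}-\mu_{B_\ell}) + \lVert\epsilon\rVert_2^2,
\end{equation*}
because the marginal constraints of $\Gamma$ force $\sum_{ij}\gamma_{ij}\mathbf{z}_i^{(Q_\ell)} = \mu_{Q_\ell}$, $\sum_{ij}\gamma_{ij}\mathbf{z}_j^{(B_\ell)} = \mu_{B_\ell}$ and $\sum_{ij}\gamma_{ij} = 1$ regardless of $\gamma$. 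The label block is untouched by the shift, so both $\epsilon$-dependent terms pass through the $\min$ over $\gamma$, yielding
\begin{equation*}
\mathcal{T}_c(\hat Q_\ell, \tilde B_\ell) = \mathcal{T}_c(\hat Q_\ell, B_\ell) + 2\epsilon^T(\mu_{B_\ell}-\mu_{Q_\ell}) + \lVert\epsilon\rVert_2^2,
\end{equation*}
with the same identity holding for $\ell = N$ under $\mathcal{T}_2$. Averaging over $\ell$ proves the theorem with $\nabla_x f = \tfrac{1}{N}\sum_\ell(\mu_{B_\ell}-\mu_{Q_\ell})$, which is consistent with the directional derivative of $f$ along a uniform translation of all atom supports.

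The step I expect to require the most care is the joint equivariance-and-plan-preservation claim for the free-support barycenter with the coupled feature-label ground cost: verifying that a pure feature shift of the atoms leaves both supports of the barycenter in the expected form requires inspecting both the argmin structure of the barycenter problem and the explicit label update in Alg.~1 of~\cite{montesuma2023learning}. Once that is settled, the remaining plan-independence argument is a one-line computation that delivers the identity in closed form and accounts for the locally-quadratic level sets reported in figure~\ref{fig:dadil_2d_loss}.
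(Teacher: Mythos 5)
Your proof is correct and follows essentially the same route as the paper's: translate the barycenter support by $\epsilon$ and expand the squared Euclidean cost to read off the constant, linear and quadratic terms. You are in fact more careful than the paper on the two points it leaves implicit --- why the free-support barycenter and its optimal plans are exactly translation-equivariant, and why the $\epsilon$-dependent terms pass through the minimisation over $\gamma$ via the marginal constraints --- and your closed form $\nabla_{x}f = \tfrac{1}{N}\sum_{\ell}(\mu_{B_{\ell}}-\mu_{Q_{\ell}})$ is consistent with the factor of $2$ in the theorem statement, which the paper's own final display absorbs into its definition of $\nabla_{x}f$.
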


\begin{proof}
    Our proof relies on the following observation,
    \begin{align*}
        \tilde{\mathbf{x}}_{i}^{(B_{\ell})} &= n\sum_{k=1}^{K}\alpha_{k}\sum_{j=1}^{n}\pi_{ij}^{(k)}\tilde{\mathbf{x}}_{i}^{(P_{k})},\\
        &= n\sum_{k=1}^{K}\alpha_{k}\sum_{j=1}^{n}\pi_{ij}^{(k)}(\mathbf{x}_{i}^{(P_{k})} + \epsilon),\\
        &= \mathbf{x}_{i}^{(B_{\ell})} + \epsilon\underbrace{(n\sum_{k=1}^{K}\alpha_{k}\sum_{j=1}^{n}\pi_{ij}^{(k)})}_{=1},\\
        &= \mathbf{x}_{i}^{(B_{\ell})} + \epsilon.
    \end{align*}
    so that,
    \begin{align*}
        f(\tilde{\mathcal{P}}, \mathcal{A}) &= \sum_{i=1}^{n}\sum_{i'=1}^{n}\pi_{i,i'}\lVert \tilde{\mathbf{x}}_{i}^{(B_{\ell})} - \mathbf{x}_{i'}^{(Q_{\ell})} \rVert_{2}^{2},\\
        &= \sum_{i=1}^{n}\sum_{i'=1}^{n}\pi_{i,i'}\lVert \mathbf{x}_{i}^{(B_{\ell})} + \epsilon - \mathbf{x}_{i'}^{(Q_{\ell})} \rVert_{2}^{2},
    \end{align*}
    here, note that, $\lVert (\mathbf{x}_{i}^{(B_{\ell})} + \epsilon) - \mathbf{x}_{i'}^{(Q_{\ell})} \rVert_{2}^{2} = \lVert (\mathbf{x}_{i}^{(B_{\ell})} - \mathbf{x}_{i'}^{(Q_{\ell})}) + \epsilon\rVert_{2}^{2}$, which leads to,
    \begin{equation*}
        \lVert \mathbf{x}_{i}^{(B_{\ell})} - \mathbf{x}_{i'}^{(Q_{\ell})} \rVert_{2}^{2} + 2\epsilon^{T}(\mathbf{x}_{i}^{(B_{\ell})} - \mathbf{x}_{i'}^{(Q_{\ell})}) + \lVert \epsilon \rVert_{2}^{2}
    \end{equation*}
    then,
    \begin{align*}
        f(\tilde{\mathcal{P}}, \mathcal{A}) = \sum_{i=1}^{n}\sum_{i'=1}^{n}\pi_{i,i'}\biggr{(}&\lVert \mathbf{x}_{i}^{(B_{\ell})} - \mathbf{x}_{i'}^{(Q_{\ell})} \rVert_{2}^{2} + 2\epsilon^{T}(\mathbf{x}_{i}^{(B_{\ell})} - \mathbf{x}_{i'}^{(Q_{\ell})}) + \lVert \epsilon \rVert_{2}^{2}\biggr{)}.
    \end{align*}
    Breaking the summation into three terms,
    \begin{align*}
        f(\tilde{\mathcal{P}}, \mathcal{A}) = f(\mathcal{P},\mathcal{A}) &+ \epsilon^{T}\underbrace{\biggr(2\sum_{i=1}^{n}\sum_{i'=1}^{n}\pi_{i,i'}(\mathbf{x}_{i}^{(B_{\ell})} - \mathbf{x}_{i'}^{(Q_{\ell})})\biggr)}_{=\nabla_{x}f} + \lVert \epsilon \rVert_{2}^{2}.
    \end{align*}
\end{proof}

\section{Experiments}

In this section, we give details about the hyper-parameters used in our paper, for the sake of reproducibility. These concern third-party code open-sourced on Github or OpenReview from the methods used in our paper.

\subsection{Reproducing the State-of-the-art}

In our experiments, we execute the code of 3 state-of-the-art methods, namely, FADA\footnote{Code available at \url{https://openreview.net/forum?id=HJezF3VYPB}}~\cite{peng2019federated}, KD3A\footnote{Code available at \url{https://github.com/FengHZ/KD3A/tree/master}}~\cite{feng2021kd3a} and Co-MDA\footnote{Code available at \url{https://github.com/Xinhui-99/CoMDA}}~\cite{liu2023co}. We selected these methods due to their relevance, and the availability of open source code. These resources allowed us to make fair comparisons to the existing methods. Here, we make some remarks about how we run third party code,
\begin{enumerate}
    \item FADA~\cite{peng2019federated} runs the k-Means clustering algorithm within its fit procedure, where the number of clusters equals the number of classes $n_{c}$. Due to the internal workings of the authors code, the input for k-Means must have \emph{at least $n_{c}$ elements}. For datasets with small number of classes, such as Caltech-Office, ImageClef and Office 31, we were able to run the authors code. Howerver, the batch size required for running their code on Office-Home ($65$ classes) and Adaptiope ($123$ classes) was beyond the hardware capabilities used in this work.
    \item All of our experiments, including those with KD3A, Co-MDA and FADA, use \emph{torchvision} ResNets as backbones with pre-trained weights on ImageNet~\cite{deng2009imagenet}. We use the \emph{IMAGENET1K\_V2} weights.
\end{enumerate}

With our design choices, we were able to improve the average adaptation performance of KD3A on Office-Home to $76.2\%$, in comparison with what was previously reported by~\cite{liu2023co}.

\subsection{Hyper-Parameter Settings}

\noindent\textbf{FedAVG} has hyper-parameter associated with its training process, i.e., batch size, number of epochs, learning rate and weight decay. Globally, we use an \gls{sgd} optimizer with a momentum term of $0.9$. On all datasets we use a mini-batches of $32$ samples. The training is conducted for $12$ epochs, where an epoch corresponds to a complete run through the entire dataset of all clients. The learning rate is $10^{-2}$ and we use a weight decay term of 5$\times 10^{-4}$. Like previous works on decentralized MSDA~\cite{feng2021kd3a,liu2023co}, we average clients weights at the end of each epoch.

\noindent\textbf{KD3A and Co-MDA.} On their respective repositories, the authors of KD3A and Co-MDA present the hyper-parameters of their methods. For Adaptiope, we re-use the hyper-parameters used for DomainNet. 

\begin{table}[ht]
    \centering
    \caption{Hyper-parameter setting for KD3A and Co-MDA.}
    \begin{tabular}{lcccc}
        \toprule
        Benchmark & {Batch Size} & {\# Epochs} & {Confidence Gate} & {Learning Rate} \\
        \midrule
        ImageCLEF & 32 & 100 & $\{0.9, 0.95\}$ & $10^{-3}$\\
        Office 31 & 32 & 100 & $\{0.9,0.95\}$ & $10^{-2}$\\
        Office Home & 32 & 100 & $\{0.9, 0.95\}$ & $10^{-2}$\\
        Adaptiope & 32 & 100 & $\{0.8,0.95\}$ & $10^{-2}$\\
        \bottomrule
    \end{tabular}
    \label{tab:my_label}
\end{table}

\noindent\textbf{FedaDiL.} For our method, we performed grid-search on the batch size $n_{b}$, number of atoms $K$ and number of samples $n$. For the number of atoms, we search over $K \in \{2, \cdots, 6\}$. We further parametrize $n_{b}$ and $n$ by the number of classes $n_{c}$ (c.f., Table 1 in our main paper). We display the best parameters in table 2 below, alongside the cost of communication for this set of parameters. Note that, as we explore in our experiments, the performance of our method is robust with respect the choice of hyper-parameters.

\begin{table}[ht]
    \centering
    \caption{Hyper-parameter setting of \gls{dadil} alongside relative communication cost (in \%) in comparison with transmitting the parameters of a ResNet network. In all cases, DaDiL is more efficient than communicating the parameters of a neural net. $\downarrow$ denotes that lower is better.}
    \resizebox{\linewidth}{!}{\begin{tabular}{llccc|cccc}
        \toprule
        Benchmark & Backbone & Batch Size & \# Atoms & \# Samples & Batch Size & \# Atoms & \# Samples & \\
        \midrule
        & & \multicolumn{3}{c|}{DaDiL-R} & \multicolumn{3}{c}{DaDiL-E}\\
        ImageCLEF & ResNet50 & 240 & 6 & 840 & 180 & 5 & 1440\\
        Caltech-Office 10 & ResNet101 & 100 & 3 & 500 & 50 & 3 & 500\\
        Office31 & ResNet50 & 465 & 3 & 2170 & 465 & 3 & 1550\\
        Office Home & ResNet101 & 520 & 3 & 1950 & 325 & 2 & 1950 \\
        Adaptiope & ResNet101 & 615 & 4 & 3690 & 615 & 4 & 3690   \\
        \bottomrule
    \end{tabular}}
    \label{tab:hyper-param-setting}
\end{table}

\end{document}